\def\eqref#1{equation~\ref{#1}}
\def\1{\bm{1}}
\DeclareMathAlphabet{\mathsfit}{\encodingdefault}{\sfdefault}{m}{sl}
\SetMathAlphabet{\mathsfit}{bold}{\encodingdefault}{\sfdefault}{bx}{n}
\crefname{section}{Appendix}{Appendices}
\Crefname{section}{Appendix}{Appendices}
\newcommand{\best}[1]{
    \begingroup
    \setlength{\fboxsep}{1.5pt}
    \textbf{\colorbox{cyan!20}{{#1}}}
    \endgroup
}
\newtheorem{remark}{Remark}
\newtheorem{theorem}{Theorem}[section]      
\title{Still Competitive: Revisiting Recurrent Models for Irregular Time Series Prediction}
\author{\name Ankitkumar Joshi \email ahjoshi@pitt.edu \\
      \addr Department of Computer Science\\
      University of Pittsburgh
      \AND
      \name Milos Hauskrecht \email milos@pitt.edu\\
      \addr Department of Computer Science\\
      University of Pittsburgh}
\begin{document}

\maketitle

\begin{abstract}
Modeling irregularly sampled multivariate time series is a persistent challenge in domains like healthcare and sensor networks. While recent works have explored a variety of complex learning architectures to solve the prediction problems for irregularly sampled time series, it remains unclear what the true benefits of some of these architectures are, and whether clever modifications of simpler and more efficient RNN-based algorithms are still competitive, i.e. they are on par with or even superior to these methods. In this work, we propose and study GRUwE: Gated Recurrent Unit with Exponential basis functions, that builds upon RNN-based architectures for observations made at irregular times. GRUwE supports both regression-based and event-based predictions in continuous time. GRUwE works by maintaining a Markov state representation of the time series that updates with the arrival of irregular observations. The Markov state update relies on two reset mechanisms: (i) observation-triggered reset to account for the new observation, and (ii) time-triggered reset that relies on learnable exponential decays, to support the predictions in continuous time. Our empirical evaluations across several real-world benchmarks on next-observation and next-event prediction tasks demonstrate that GRUwE can indeed achieve competitive or superior performance compared to the recent state-of-the-art  (SOTA) methods. Thanks to its simplicity, GRUwE offers compelling advantages: it is easy to implement, requires minimal hyper-parameter tuning efforts, and significantly reduces the computational overhead in the online deployment.
\end{abstract}

\section{Introduction}
Multivariate time series and their models are central to understanding and analyzing real-world dynamical systems. While traditional sequence models typically assume regularly spaced observations, real-world data such as healthcare records or sensor measurements, are often irregularly sampled. In such settings, events or measurements may occur at uneven time intervals and under varying contextual conditions (e.g., specialized laboratory tests in clinical environments may be performed only during acute episodes). The key challenge is to develop time-series models that can accurately capture dependencies among multiple variables and events occurring at irregular times. In this work, we study this problem in the context of both time-series forecasting and event prediction.

A variety of models ranging from classic statistical and modern deep learning frameworks have been developed to support time-series prediction task for irregular settings. Early approaches replaced the irregularly observed data with regularly spaced observations by inferring their values at the regular time points. Classic statistical auto-regressive models (AR, ARMA, ARIMA) \cite{Shumway_Time_Series_Analysis} or latent space models, such as, Linear Dynamical Systems (LDS) \cite{kalman1963mathematical} models could then be applied to support prediction at future regular times. To support prediction at arbitrary future prediction times, various smoothing and interpolation methods \cite{Liu2015} were developed. More recently, classic time-series models have been gradually replaced with various types of modern neural architectures capable of handling irregularly sampled observations \cite{TimeSeries_LSTM}. The existing methods include extensions of \textit{RNN approaches} \cite{GRUDPaper, NeuralHawkes}, and continue with \textit{differential equation approaches} \cite{LatentODEPaper,GRUODEBayes,NeuralODE, CRUPaper, RKNPaper},  \textit{attention-based approaches} \cite{mTANDPaper,ContiFormer}, \textit{graph-based approaches} \cite{Raindrop, TPGNN, grafitiPaper} and \textit{state-space modeling approaches} \cite{SSM,Mamba}.

Recent work on irregular multivariate time series has largely gravitated toward increasingly complex architectures such as continuous-time Neural ODEs \cite{LatentODEPaper,NeuralCDEPaper,ContiFormer}, transformers \cite{TransformerHawkes,SAHP,mTANDPaper,AttNHP}, and graph-based models \cite{TPGNN,grafitiPaper,HyperIMTS,hiPatch}. While these approaches introduce powerful modeling mechanisms, they often require intricate training pipelines, extensive hyperparameter tuning, and incur substantial computational overhead. 
Moreover, inductive biases folded in the complex architectures may be difficult to interpret or analyze. These factors may limit the benefits of such architectures in many realistic settings particularly, when training data are limited or deployment environments are resource constrained (e.g., bedside monitoring systems or embedded clinical devices), where the efficiency of inference is critical for their deployment. This raises a natural question: \textit{could carefully designed, time-series prediction models that are rooted in simpler architectures, still achieve comparable or even superior predictive performance at significantly lower computational cost?}  We investigate this question by proposing \textbf{GRUwE} (\underline{G}ated \underline{R}ecurrent \underline{U}nit \underline{w}ith \underline{E}xponential basis functions), pronounced as "groovy", a relatively simple yet effective GRU-based architecture that maintains the Markov state representation of the multivariate process that is updated at irregular observation times. GRUwE models the effect of time with the help of learnable exponential basis functions. State updates are affected by both: (i) the arrival of new observations, and (ii) the elapsed time since the previous observation.  This dual mechanism allows the model to naturally accommodate influence of past observations and irregular time intervals between them. To make predictions at an arbitrary future time, GRUwE reuses the same exponential basis functions that are used for state updates. 

As noted above, GRUwE supports both time series forecasting and event prediction tasks. For event prediction, we define a Temporal Point Process (TPP) model with the help of a Conditional Intensity Function (CIF), which characterizes the instantaneous rate of event occurrence. Following prior works by \cite{RMTPP,NeuralHawkes}, the CIF is modeled by applying a Softplus transformation to the output of a neural network regression model. This formulation allows GRUwE’s predictions at arbitrary future times to directly define the event intensity. We compare the GRUwE-based event prediction model against state-of-the-art CIF-based approaches, including methods based on RNN models \cite{RMTPP,NeuralHawkes} and attention-based models \cite{SAHP,TransformerHawkes,AttNHP}.

\noindent The main contributions of this work are:
\begin{itemize}[leftmargin=7pt, itemsep=0pt, topsep=0pt]
    \item We propose GRUwE, an irregular time series model, that maintains a compact Markov state representation, defines two reset state update mechanisms to capture dependencies inherent in multivariate time series and supports continuous-time inference.
    \item We perform a comprehensive evaluation of GRUwE against state-of-the-art baselines on next-observation and next-event prediction tasks across multiple real-world datasets. Our results show that GRUwE performs competitively on both tasks, achieves new state-of-the-art performance on two datasets for next-observation prediction, and attains the highest overall rank for next-event prediction.
    \item We demonstrate that GRUwE’s compact Markov state representation leads to substantial improvements in computational efficiency during online deployment.
\end{itemize}

\section{Related Work}
\label{section:related_work}
In this section, we review existing work on time series prediction for irregularly sampled multivariate time series data and contrast them to the proposed GRUwE model. Our review is structured along key architectural designs that have been proposed for time-series and event prediction tasks.   

\textbf{Recurrent neural network (RNN) approaches} exploit efficient sequential hidden-state updates for prediction, with the hidden state serving as a compact, Markovian summary of all past observations. Classical RNN models assume regularly sampled time series and therefore require interpolation schemes to handle irregularly sampled data. Several extensions adapt recurrent models to continuous time by updating the hidden state only at observation times, including Recurrent Marked Temporal Point Processes (RMTPP) \cite{RMTPP}, GRU-D \cite{GRUDPaper}, FullyNN \cite{FullyNN}, and the Neural Hawkes Process (NHP) \cite{NeuralHawkes}. RMTPP extends classic RNN architecture by adding a parametric conditional intensity function that predicts the distribution of the next event time directly from the RNN’s hidden state. NHP, based on LSTM \cite{LSTMPaper} architecture, uses exponential decays to a predicted target cell state to derive a continuous-time cell state representation, which is eventually used to represent the intensity function to model time-varying intensities and complex temporal point-process dynamics. GRU-D extends GRU by explicitly modeling missingness and irregular sampling using learnable decay mechanisms that impute both inputs and hidden states based on the time since each variable was last observed. Missing observations are inferred in GRU-D using decay mechanisms that revert to a target value, typically determined by the mean value of the variable. The GRU-D and GRUwE models are similar in their use of the exponential decay functions for modeling the effect of elapsed time on the hidden state. However, GRU-D uses decay functions to infer missing observation values. As a result, it may propagate estimation errors for missing observations in time. In contrast, GRUwE does not interpolate missing observations and uses exponential decay functions to directly model latent variable trajectories both for the state update and prediction module.   

\textbf{Differential equation approaches} offer another widely used framework for modeling multivariate time series in continuous time. These methods define latent dynamics with the help of differential equations. Neural ODEs \cite{NeuralODE} model temporal evolution using ordinary differential equations parameterized by neural networks. Extensions such as Latent ODE and ODE-RNN \cite{LatentODEPaper} combine these dynamics with variational and recurrent architectures to handle irregular observations. A key limitation of Neural ODEs is that their solutions depend on fixed initial conditions that cannot easily adapt to observed data. Neural controlled differential equations \cite{NeuralCDEPaper, GRUODEBayes} address this limitation by allowing continuous modulation based on the input observations. However, most differential equation–based approaches require external numerical solvers, significantly increasing training and inference costs. \cite{CRUPaper, RKNPaper} avoid the use of numerical solvers for continuous-time dynamics by modeling latent state transitions using linear stochastic differential equations that admit closed-form solutions. In contrast, GRUwE captures latent dynamics through learnable exponential decay functions, requiring no numerical solvers and imposing no linearity assumptions on the underlying state evolution.

\textbf{Temporal attention} methods eliminate the need to approximate a latent Markov state representation and instead learn prediction functions directly from the observed inputs. Models such as mTAND \cite{mTANDPaper}, THP \cite{TransformerHawkes}, SAHP \cite{SAHP}, A-NHP \cite{AttNHP} embeds time into fixed-size learnable vectors and learn relationship between irregular observations and prediction timepoints in continuous time. ContiFormer \cite{ContiFormer} combines Neural ODEs with the continuous-time attention mechanism to model irregular observations.

\textbf{Graph Neural Networks (GNN) approaches} focus on modeling interactions and dependencies among individual time series. T-PatchGNN \cite{TPGNN} uses a patch-based mechanism on univariate time series to enhance local feature extraction and use graph neural network to learn relationships among different time series. GraFITi \cite{grafitiPaper} first converts irregular samples to a special bipartite graph structure and cast the prediction problem as an edge weight prediction. HyperIMTS\cite{HyperIMTS} improves on the GraFITi method by replacing its fixed bipartite graph with a learnable, heterogeneous hypergraph that jointly models variable–time relations, enables richer message passing, and produces more expressive irregular time series representations. Other graph-based temporal \cite{Raindrop}, spatiotemporal \cite{SpatioTemporalGraph} and diffusion-based \cite{CSDI} models have been proposed to address irregularly spaced observations. Overall, the main difference between GRUwE and existing temporal attention and GNN approaches is that GRUwE maintains a Markov state representation that enables efficient update and deployment in real-time settings supporting sequential prediction tasks.  Typically the above transformer and graph-based models require one to buffer sequences of past observations to encode a newly arrived observation and reprocess all the past observations for every inference step from scratch which may become a computational bottleneck when sequences of past observation become very long. In contrast, GRUwE updates its state using only the previous state and the newly arrived observation, making it simple, efficient, and practical for real-time deployment.

\section{Problem Setting}

\begin{figure}[t]
  \centering
  \begin{subfigure}[t]{0.46\linewidth}
    \centering
    \includegraphics[width=\linewidth]{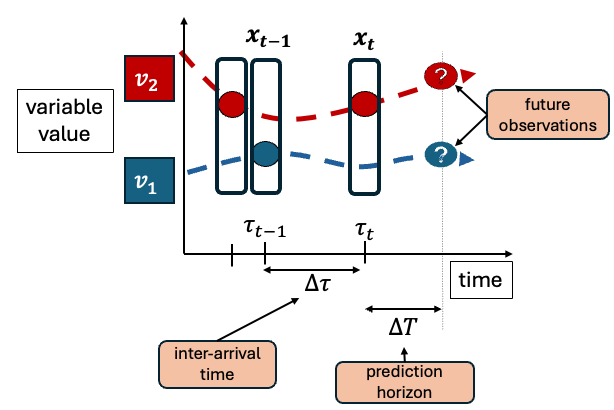}
    \caption{Abstract diagram of the next-observation prediction problem for irregular time series. The true continuous dynamics function of the variables $\{v_1, v_2\}$ (dotted line) are unknown; the model only sees their values at irregular time points to formulate future predictions.}
    \label{fig:abstract_problem_setup}
  \end{subfigure}\hfill
  \begin{subfigure}[t]{0.42\linewidth}
    \centering
    \includegraphics[width=\linewidth]{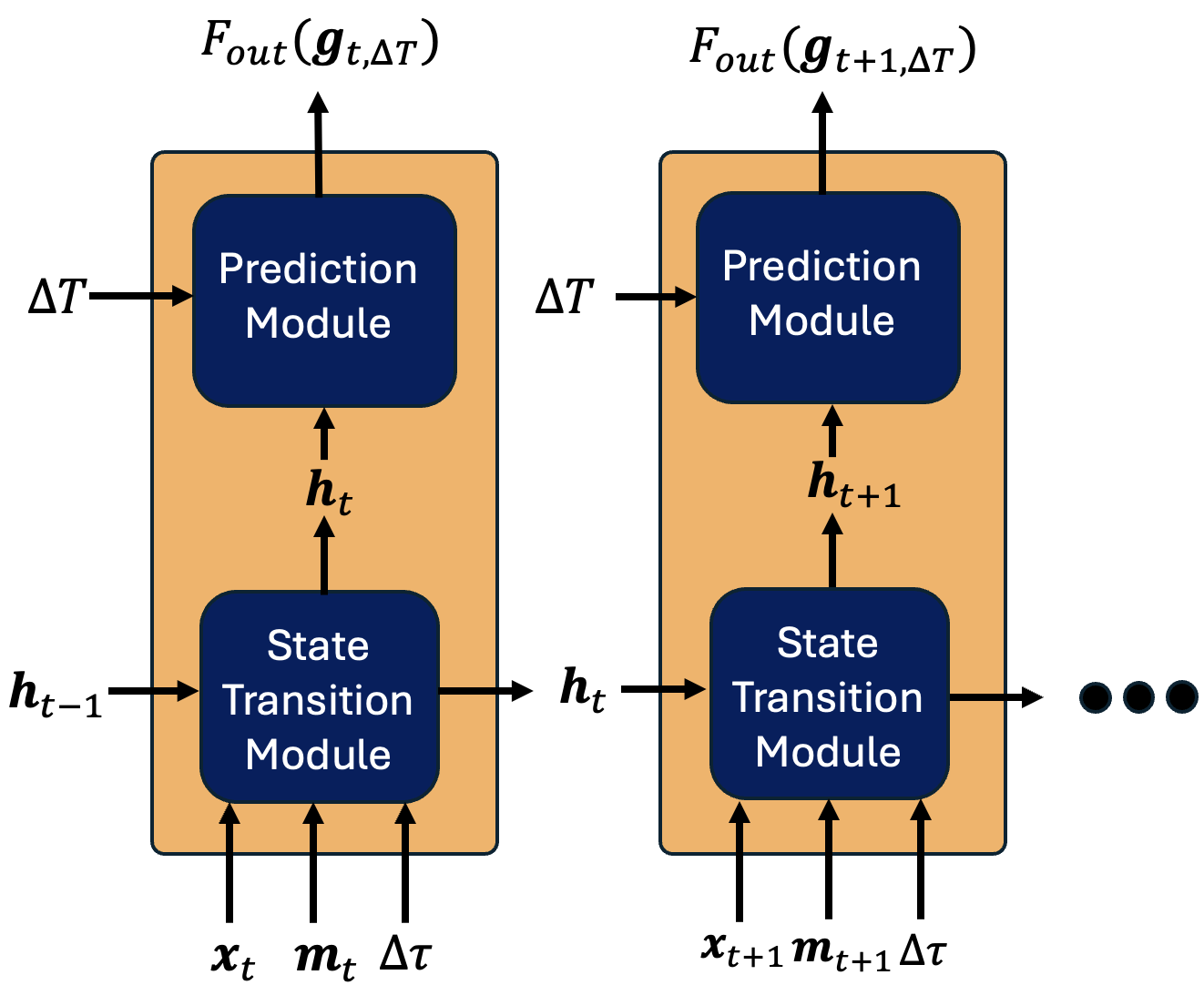}
    \caption{GRUwE's overall architecture. At each observation time point, $\{\textbf{x}_t, \textbf{m}_t, \Delta \tau\}$ are processed to update the previous hidden state ($\textbf{h}_{t-1} \rightarrow \textbf{h}_t$). The prediction module make a prediction using the hidden state $\textbf{h}_t$ and prediction horizon $\Delta T$.}
    \label{fig:gruwe_overall}
  \end{subfigure}
  \caption{Description of the problem and overview of the model architecture.}
  \label{fig:overview_side_by_side}
\end{figure}
Our goal is to model $D$-dimensional multivariate time series with irregularly spaced observations. For clarity, consider the case $D = 2$ with variables $v_1$ and $v_2$, whose latent temporal dynamics are illustrated as dotted trajectories in \cref{fig:abstract_problem_setup}. The underlying generative process governing these variables is unknown. Observations occur at non-uniform time points (shown as dots), and may not be synchronized across variables; for instance, $v_1$ may be missing when $v_2$ is observed. At each time step $t \in \mathbb{Z}$ (corresponding to continuous time $\tau_t \in \mathbb{R}$), we define an input vector $\textbf{x}_t \in \mathbb{R}^D$ where observed entries contain values and unobserved entries are masked.
The time elapsed between successive observation times, also known as inter-arrival time, is denoted by $\Delta \tau = (\tau_t - \tau_{t-1}) \in \mathbb{R}_+$. To support prediction at any future time, we define a prediction horizon $\Delta T \in \mathbb{R}_+$, representing how far ahead (relative to $\tau_t$) the prediction should be made. In the \textit{next observation prediction} task, the model generates a predicted observation vector $\hat{\textbf{x}}_{t, \Delta T}$ at time $\tau_t + \Delta T$, conditioned on the history of all observations up to time step $t$. In the \textit{next event prediction} setting, the TPP model outputs event intensities $\lambda(\tau_t + \Delta T)$, given the event history up to time step $t$.

\section{GRUwE: Gated Recurrent Unit with Exponential Basis Functions}
Our goal is to define and learn a prediction model 
$f(\textbf{x}_{1:t}, \bm{\tau}_{1:t}, \Delta T)$ that takes a history of a $D$-dimensional multivariate time series $\textbf{x}_{1:t}$ made at times $\bm{\tau}_{1:t}$ respectively, 
and predicts future values of time series at time $\tau_t + \Delta T$.

\textbf{Markov State.} Since the observation history grows over time, we approximate it using a fixed-size state representation $\mathcal{H}_t$ that summarizes what is known about the process until time step $t$.  If $\mathcal{H}_t$ accurately summarizes the history of observations, the process becomes Markovian, rendering past observations conditionally independent of the future given the current state. This property allows the state to be updated using only the previous state and the newly arrived observation, which greatly simplifies real-time deployment.
The state $\mathcal{H}_t$ of our GRUwE model consists of a latent vector $\textbf{h}_t$ representing dependencies among different time series variables and their values. Thus, $\mathcal{H}_t =\{\textbf{h}_t\}$ at time $\tau_t$ summarizes all the observations made until time $t$.

\textbf{State Transition Module.}
As new observations arrive in time, the state $\mathcal{H}_t$ representing the information seen so far must be updated. The transition between the two consecutive GRUwE's states is implemented using two complementary update mechanisms. The first accounts for the time elapsed since the previous observation. We refer to it as to the time-triggered reset mechanism. The second mechanism incorporates the influence of a newly observed input. We refer to it as to the observation-triggered reset mechanism. In the following, we describe the design of these mechanisms in more detail.

\textbf{Time-Triggered Reset Mechanism} models the effect of elapsed time between two consecutive observations using learnable exponential decay functions. 
The decay functions take the previous hidden state and the time difference between previous and new observations to update the state. The decay functions are defined as:
\begin{equation*}
\bm{\gamma(\Delta \tau}) = \exp\{-\max(\textbf{0}, \textbf{W}_{\gamma}{\bm{\Delta \tau}} + \textbf{b}_{\gamma})\},
\end{equation*}
where $\bm{\Delta \tau} = (\tau_t - \tau_{t-1}) \cdot \mathds{1}$ denotes the elapsed time vector, with $\mathds{1}$ representing a vector of ones that broadcasts the scalar time difference across all state dimensions. The learnable parameters $\mathbf{W}_{\gamma}$ and $\mathbf{b}_{\gamma}$ control the rate at which each component of the hidden state decays as a function of elapsed time. The time-decayed hidden state is then computed as:
\begin{equation}
    \textbf{g}_{t-1, \Delta \tau} = \bm{\gamma(\Delta \tau}) \odot \textbf{h}_{t-1},
\end{equation}
where $\odot$ denotes element-wise multiplication operation.

\textbf{Observation-Triggered Reset Mechanism} models the influence of a new observation on the state. 
The input (new observation) at each time-step $t$ is defined by an input vector $\textbf{x}_t$  and a mask vector $\textbf{m}_t$ indicating if an individual variable $i$ is observed at time-step $t$:
\begin{numcases} {m_{t,i}=}
1, & \text{if } $x_{t,i}$ \text{ is observed,} \\
0, & \text{otherwise.}
\end{numcases}
Since observations for multivariate times series may arrive at different times, the values missing in the specific input vector are masked. 
That is, at each time-step, the recurrent unit takes two vectors as the input: 1) a mask vector ($\textbf{m}_t$) and 2) input vector ($\textbf{x}_t$) where missing values are substituted with zeros:
\begin{equation}
    \textbf{x}'_t = \textbf{m}_t \odot \textbf{x}_t\,.
\end{equation}

Computing the hidden state ($\textbf{h}_t$) at $\tau_t$, given $\textbf{g}_{t}$, $\textbf{m}_{t}$ and $\textbf{x}_t$ involves a set of updates similar to the ones found in the standard GRU unit:
\vspace{0em}
\begin{align*}
    \textbf{z}_t &= \sigma(\textbf{W}_z \textbf{x}'_t + \textbf{U}_z \textbf{g}_{t-1, \Delta \tau} + \textbf{V}_z \textbf{m}_t + \textbf{b}_z), \\ \textbf{r}_t &= \sigma(\textbf{W}_r \textbf{x}'_t + \textbf{U}_r \textbf{g}_{t-1, \Delta \tau} + \textbf{V}_r \textbf{m}_t + \textbf{b}_r), \\ 
    \tilde{\textbf{h}}_t &= \tanh(\textbf{W}_h \textbf{x}'_t + \textbf{U}_h(\textbf{r}_t \odot \textbf{g}_{t-1, \Delta \tau}) + \textbf{V}_h \textbf{m}_t + \textbf{b}), \\ \textbf{h}_t &= (1 - \textbf{z}_t) \odot \textbf{g}_{t-1, \Delta \tau} + \textbf{z}_t \odot \tilde{\textbf{h}}_t.
\end{align*}

Note that $\textbf{W}$s, $\textbf{U}$s, $\textbf{V}$s and $\textbf{b}$s are learnable parameters of the model. The parameters let us fit the hidden state component and observations so that the dependencies among time series most important for the prediction are captured.

\textbf{Prediction module.} Our main goal is to predict future observations in continuous time given the past observation sequence. At each time step $t$, we rely only on the information encoded in the model’s state $\mathcal{H}_t$ to support prediction. To enable flexible prediction across arbitrary future time points, we introduce a prediction horizon $\Delta T \in \mathbb{R}_{+}$, which specifies how far ahead in time (from the current time $\tau_t$) the prediction should be made. The prediction module in \textsc{GRUwE} estimates the future output at time $\tau_t + \Delta T$ by conditioning on the Markov state $\mathcal{H}_t$ at time $\tau_t$. It operates in two stages by: (i) projecting the Markov state to the target time $\tau_t + \Delta T$, and (ii) generating predicted values at the future time using a task-specific output function. For projecting the Markov state to the prediction time one may, in principle, choose arbitrary functions of continuous-time different from the exponential decay functions used to model the effect of elapsed time in the state update (time-triggering reset mechanism).  However, for simplicity sake, GRUwE shares the exponential basis functions to model the dynamics of Markov state in the prediction module: 
\begin{equation}
\bm{\gamma(\Delta T}) = \exp\{-\max(\textbf{0}, \textbf{W}_{\gamma}{\bm{\Delta T}} + \textbf{b}_{\gamma})\},
\end{equation}
\begin{equation}
    \textbf{g}_{t, \Delta T} = \bm{\gamma(\Delta T}) \odot \textbf{h}_{t}.
\end{equation}
After projecting the state to the prediction time, the prediction module generates the output by applying a task-specific output function $F_{out}(\cdot)$ to the projected state at prediction horizon $(\Delta T)$:
\begin{equation}
    \widehat{\textbf{x}}_{t, \Delta T} = F_{out}(\textbf{g}_{t, \Delta T}).
\end{equation}

For the next observation prediction, the output function can be defined by using a linear combination of the components of the projected hidden state or a more complex nonlinear combination defined by a Multi-Layer Perceptron (MLP). For simplicity sake, in our experiments, we use a linear combination specified using parameters $\{\textbf{W}_{out}, \textbf{b}_{out}\}$ as the $F_{out}(\cdot)$:
\begin{equation}
    \widehat{\textbf{x}}_{t, \Delta T} = \textbf{W}_{out}\ \textbf{g}_{t, \Delta T} + \textbf{b}_{out}.
\end{equation}
For the next event prediction, we use GRUwE's decayed hidden state ($\textbf{g}_{t,\Delta T}$) to formulate the Conditional Intensity Function (CIF) for event type $k$ at time $\tau_t + \Delta T$ as:
\begin{equation}
    \lambda(\tau_t + \Delta T \mid \textbf{h}_t, k)  = \textrm{Softplus}(\textbf{w}_{\lambda}^{k} \ \textbf{g}_{t, \Delta T} + b_{\lambda}^k) 
    \label{eq:intensity-function}.
\end{equation}
Here, the linear projection parameters $\textbf{w}^k_{\lambda}$ and $b^k_{\lambda}$ are the parameters of the CIF model specific to event type $k$. We apply the Softplus non-linearity to ensure that the output is positive, as intensity represents the instantaneous rate of event arrivals.

\textbf{Summary of GRUwE.}
Our proposed model GRUwE, is a latent state transition model that maintains a Markov state of the process in continuous time to summarize the historical events. GRUwE primarily consists of a state transition module and a prediction module. To update its Markov state, the state transition module defines: i) time-triggered reset mechanism to account for the time-elapsed, and ii) observation-triggered reset mechanism to incorporate the influence of the observations. GRUwE's prediction module reuses the exponential basis function (used to model inter-arrival times) to support continuous-time prediction module. For more clarity, \Cref{fig:gruwe_overall} illustrates the process of sequentially feeding observation and mask vectors along with the time elapses for each time step, followed by a state transition module, to update the previous state as a function of current inputs. Ultimately, prediction module generates a prediction at a future time $\Delta T$ with the help of the updated state.

\section{Analysis of Exponential Basis Functions}
We theoretically analyze the properties and inductive biases introduced by the proposed exponential basis functions for modeling the temporal dynamics.

\noindent\textbf{Asymptotic Behavior.} We analyze the asymptotic behavior of the exponential basis function $\bm{\gamma}(\Delta T) = \exp\{-\max(\mathbf{0}, \mathbf{W}_\gamma \Delta T + \mathbf{b}_\gamma)\}$ and its effect on the decayed state $\mathbf{g}_{t, \Delta T} = \bm{\gamma}(\Delta T) \odot \mathbf{h}_t$. Consider the component-wise behavior for dimension $i$ as $\Delta T \to \infty$:

\noindent \textbf{Case 1: $W_{\gamma,i} > 0$ (State Reset)}
\begin{itemize}
    \item \textbf{Behavior:} $g_{t,\Delta T,i} \to 0$
    \item \textbf{Explanation:} When $W_{\gamma,i} > 0$, the argument $W_{\gamma,i} \Delta T + b_{\gamma,i} \to \infty$ as $\Delta T$ increases. As a result:
    \[
    \max(0, W_{\gamma,i} \Delta T + b_{\gamma,i}) \to \infty \quad \Rightarrow \quad \exp(-\infty) \to 0.
    \]
\end{itemize}

\textbf{Case 2: $W_{\gamma,i} = 0$ (Constant Decay)}
\begin{itemize}
    \item \textbf{Behavior:} $g_{t,\Delta T,i} \to \exp(-\max(0, b_{\gamma,i})) \cdot h_{t,i}$
    \item \textbf{Explanation:} With zero weight, the decay term simplifies to a constant:
    \[
    \bm{\gamma}_i(\Delta T) = \exp(-\max(0, b_{\gamma,i})).
    \]
    This results in a constant decay of the hidden state component. The final value depends on the bias term $b_{\gamma,i}$:
    \begin{itemize}
        \item If $b_{\gamma,i} \leq 0$: $\exp(-\max(0, b_{\gamma,i})) = \exp(0) = 1$ (no decay).
        \item If $b_{\gamma,i} > 0$: $\exp(-\max(0, b_{\gamma,i})) = \exp(-b_{\gamma,i}) < 1$ (constant decay).
    \end{itemize}
\end{itemize}

\textbf{Case 3: $W_{\gamma,i} < 0$ (No Decay)}
\begin{itemize}
    \item \textbf{Behavior:} $g_{t,\Delta T,i} \to h_{t,i}$
    \item \textbf{Explanation:} For negative weights, the decay rate $W_{\gamma,i} \Delta T + b_{\gamma,i} \to -\infty$:
    \[
    \max(0, W_{\gamma,i} \Delta T + b_{\gamma,i}) \to 0 \quad \Rightarrow \quad \exp(0) = 1.
    \]
\end{itemize}

\noindent Asymptotic analysis reveals that the proposed exponential basis functions have three operating modes to express the latent temporal dynamics for each hidden component. Based on the learned value of the parameter $\textbf{W}_{\gamma}$, the decay mechanism can either decay to zeros, perform constant decay, or no decay at all. The no decay mode can be used to \textit{remember} the hidden state components indefinitely as time passes. This mechanism can be used to capture very long-term dependencies. The constant-decay mode is useful to capture dependencies that relies on counts, or perhaps when the inter-arrival times are approximately regularly spaced (since the decay is independent of time-elapsed $\Delta T$). When $\textbf{W}_\gamma > 0$, the exponential basis functions decay with increasing $\Delta T$, causing the state to contract toward the zero vector (or, equivalently to its initial state). Intuitively, it means that as more time passes (since the last observation), the impact of historical arrivals begin to diminish. In the limit, time-triggered reset mechanism forces the state representation to reset to its initial state. This leads to an interesting alternative interpretation of the exponential basis functions: we are effectively learning a \textit{dynamic forgetting process}, in which the \textit{forget rates} are regulated with the help of exponential basis functions.

\textbf{Lipschitz Continuity.} The mapping $\gamma: \mathbb{R}_+ \rightarrow [0,1]$ is the exponential decay function defined as
$\gamma(\Delta T) = \exp\left\{ -\max\left(0, W \Delta T + b \right) \right\}$,
where $W > 0$ and $b \in \mathbb{R}$ are fixed scalar parameters. Then, $\gamma$ is Lipschitz continuous on $\mathbb{R}_+$, with Lipschitz constant:
\[
L = W \cdot \exp(-b).
\]
Refer to the detailed proof in \cref{appendix:lipschitz_cont}. This result establishes that the exponential basis function $\gamma(\Delta T)$ used in GRUwE is Lipschitz continuous with a closed-form constant that depends directly on the decay weight $\textbf{W}_\gamma$ and $\textbf{b}_{\gamma}$. The decay function is guaranteed not to change faster than $W \cdot \exp(-b)$, enabling GRUwE to dynamically adjust the smoothness of the exponential decay function and ensuring that the function evolves smoothly over time.


\section{Empirical Evaluation}
We evaluate the performance of \textsc{GRUwE} on both next observation and next event prediction tasks.

\textbf{Datasets.} For next observation prediction, we use four irregularly sampled multivariate time series datasets: the United States Historical Climatology Network (USHCN), which contains weather-related attributes, and three clinical datasets: PhysioNet, MIMIC-III-Small, and MIMIC-III-Large. Among the four datasets, USHCN is made irregular by randomly masking 50\% of the observed time-points as proposed in \cite{CRUPaper}, whereas Physionet, MIMIC-III-Small, and MIMIC-III-Large are irregular in nature. For the event prediction, we use Retweet \cite{DatasetRT} consisting of retweet event sequences; Taxi \cite{DatasetTaxi} consisting of taxi pick-up and drop-off events in five New York city boroughs; Amazon \cite{DatasetAmazon} which consists of sequences of user provided product reviews; StackOverflow \cite{DatasetStackOverflow} consisting of sequence of awards received by users on a Q\&A website, and  Taobao \cite{DatasetTaoBao} consisting of user activities on Taobao platform. We provide the description of the datasets in \cref{section:dataset-description}.

\textbf{Models.} For the next observation prediction, we compare the proposed GRUwE models with the SOTA baseline models: GRU-$\Delta_t$ \cite{GRUDPaper}, RKN-$\Delta_t$ \cite{RKNPaper}, GRU-D \cite{GRUPaper}, mTAND \cite{mTANDPaper}, CRU \cite{CRUPaper}, f-CRU \cite{CRUPaper}, ODE-RNN \cite{LatentODEPaper}, Latent ODE \cite{LatentODEPaper}, ContiFormer \cite{ContiFormer}, T-PatchGNN\cite{TPGNN}, GraFITi \cite{grafitiPaper}. In the next event prediction task, we compare GRUwE with Recurrent CIF models: RMTPP \cite{RMTPP}, NHP\cite{NeuralHawkes}; Attention-based CIF approaches including THP \cite{TransformerHawkes}, SAHP \cite{SAHP}, ATTNHP \cite{AttNHP} and Cumulative CIF approach FullyNN \cite{FullyNN}. We include more details on the implementation of the baselines in the \cref{section:baseline-methods}).

\textbf{Model training.}
For the next observation prediction task, the models processes a masked multivariate time series input, where the future observation are omitted. Models are optimized to output the entire input sequence after observing  partially masked sequence. We use Mean Squared Error (MSE) loss function to optimize the model parameters. Concretely, for a sequence $s$ with $N$ masked time steps, prediction horizon $\Delta T$, observation mask vector $\textbf{m}_{t+\Delta T}$, prediction values $\hat{\textbf{x}}_{t, \Delta T}$ and observed values $\textbf{x}_{t, \Delta T}$ , we optimize the model parameters by minimizing the masked MSE loss:
\begin{equation}
\mathcal{L}_{\text{obs}}(s) = \frac{1}{\sum_{i=1}^N \textbf{m}_{t+\Delta T}^{(i)}} \sum_{i=1}^N \textbf{m}_{t+\Delta T}^{(i)} \cdot \left( \hat{\textbf{x}}_{t, \Delta T}^{(i)} - \textbf{x}_{t, \Delta T}^{(i)} \right)^{2}.
\end{equation}
For the event prediction task, conditional intensity function parameterized with the help of GRUwE is optimized by maximizing the log-likelihood of observing a sequence $s=\{(\tau_j, k_j)\}_{j=1}^L$ at observation times $\{\tau_1, \tau_2, ..., \tau_L\} \in [0, T]$:
\begin{equation}
    \mathcal{L}_{event}(s) = \sum_{j=1}^L \log \lambda(t_j \mid \mathcal{H}_j, k_j) - \int_{t=0}^{T} \lambda(t \mid \mathcal{H}_t) \,dt.
    \label{eq:log-likelihood}
\end{equation}
where the total intensity function is given by: $\lambda(t\mid\mathcal{H}_t) = \sum_{k=1}^K \lambda(t \mid  \mathcal{H}_t, k)$.

\textbf{Evaluation criteria.}
For the next observation prediction, similar to prior works \cite{CRUPaper,grafitiPaper,TPGNN} we evaluate models on MSE and Mean Absolute Error (MAE) on the masked future values on the sequences in the test split. Using standard evaluation metrics in event prediction literature \cite{AttNHP,TransformerHawkes,SAHP}, we compare the model quality on the test set using: 
RMSE: Root Mean Squared Error in predicting the next event arrival time; ER: short for Error Rate, is the percentage of incorrectly predicted next event type and the log-likelihood of the next observed event. To ensure robust evaluation, we conduct experiments using 5 distinct random seeds and report the mean and standard deviation of the performance metrics.

\textbf{Hyperparameter search.} We conduct model-specific hyperparameter tuning by training various model configurations on the training set and evaluating their performance on the validation set. Hyper-parameter ranges for the methods were adjusted to include the optimal hyper-parameter ranges reported by the original authors. For each model, we select the configuration that achieves the lowest validation MSE (for forecasting) and lowest validation LL (for event prediction) and report its test performance. We include more details on the hyperparameter tuning in \Cref{section:tacd-gru-hopt,section:baseline-hopt-uschn,section:baseline-hopt-physionet,section:baseline-hopt-mimic-iii-small,section:baseline-hopt-mimic,section:baseline-hopt-event-prediction}.

\section{Results and Discussion}
\label{section:results-and-discussion}
\begin{table*}[]
    \tiny
    \centering
    \begin{tabular}{lcc|cc|cc|cc}
    \toprule
    \multirow{2}{*}{Model} & \multicolumn{2}{c}{\textbf{USHCN}} & \multicolumn{2}{c}{\textbf{Physionet}} & \multicolumn{2}{c}{\textbf{MIMIC-III-Large}} & \multicolumn{2}{c}{\textbf{MIMIC-III-Small}}\\
    \cmidrule(lr){2-3} \cmidrule(lr){4-5} \cmidrule(lr){6-7} \cmidrule(lr){8-9}
     & MSE $\times 10^{-2}$ & MAE $\times 10^{-2}$ & MSE $\times 10^{-2}$ & MAE $\times 10^{-2}$ & MSE $\times 10^{-2}$ & MAE $\times 10^{-2}$ & MSE $\times 10^{-1}$  & MAE $\times 10^{-1}$  \\
    \midrule
    f-CRU &$0.020\scalebox{0.75}{$\pm$ 0.007}$&$0.455\scalebox{0.75}{$\pm$ 0.077}$& $1.095\scalebox{0.75}{$\pm$ 0.069}$ & $5.295\scalebox{0.75}{$\pm$ 0.132}$ & $1.191\scalebox{0.75}{$\pm$ 0.043}$ & $6.535\scalebox{0.75}{$\pm$ 0.214}$ & $7.808\scalebox{0.75}{$\pm$ 0.021}$ & $5.963\scalebox{0.75}{$\pm$ 0.029}$ \\
    mTAND &$\underline{0.007}\scalebox{0.75}{$\pm$ 0.004}$&$0.194\scalebox{0.75}{$\pm$ 0.069}$& $0.330\scalebox{0.75}{$\pm$ 0.015}$ &$3.411\scalebox{0.75}{$\pm$ 0.130}$ & $1.260\scalebox{0.75}{$\pm$ 0.018}$ & $6.885\scalebox{0.75}{$\pm$ 0.082}$ & $5.960\scalebox{0.75}{$\pm$ 0.003}$ & $5.087\scalebox{0.75}{$\pm$ 0.017}$\\
    GRU-D &$0.015\scalebox{0.75}{$\pm$ 0.009}$&$0.386\scalebox{0.75}{$\pm$ 0.110}$& $0.672\scalebox{0.75}{$\pm$ 0.026}$ & $5.459\scalebox{0.75}{$\pm$ 0.094}$& $\underline{0.984}\scalebox{0.75}{$\pm$ 0.028}$ & $\underline{5.853}\scalebox{0.75}{$\pm$ 0.102}$ & $6.621\scalebox{0.75}{$\pm$ 0.012}$ & $5.486\scalebox{0.75}{$\pm$ 0.032}$\\
    Latent ODE &$\underline{0.007}\scalebox{0.75}{$\pm$ 0.004}$&$\underline{0.127}\scalebox{0.75}{$\pm$ 0.031}$& $0.676\scalebox{0.75}{$\pm$ 0.005}$ & $5.302\scalebox{0.75}{$\pm$ 0.001}$ & $1.209\scalebox{0.75}{$\pm$ 0.018}$ & $6.490\scalebox{0.75}{$\pm$ 0.049}$ & $6.658\scalebox{0.75}{$\pm$ 0.010}$ & $5.547\scalebox{0.75}{$\pm$ 0.026}$ \\
    ContiFormer &$\best{0.005} \scalebox{0.75}{$\pm$ 0.002}$&$\best{0.125} \scalebox{0.75}{$\pm$ 0.016}$& $0.479 \scalebox{0.75}{$\pm$ 0.024}$ & $4.232 \scalebox{0.75}{$\pm$ 0.001}$ & $1.348\scalebox{0.75}{$\pm$ 0.093}$ & $6.941\scalebox{0.75}{$\pm$ 0.399}$ & $7.119\scalebox{0.75}{$\pm$ 0.022}$ & $5.873\scalebox{0.75}{$\pm$ 0.035}$\\
    ODE-RNN & $0.019\scalebox{0.75}{$\pm$ 0.017}$& $0.220\scalebox{0.75}{$\pm$ 0.116}$& $0.770\scalebox{0.75}{$\pm$ 0.042}$ & $5.519\scalebox{0.75}{$\pm$ 0.273}$ & $1.429\scalebox{0.75}{$\pm$ 0.046}$ & $7.251\scalebox{0.75}{$\pm$ 0.262}$ & $6.401\scalebox{0.75}{$\pm$ 0.008}$ & $5.302\scalebox{0.75}{$\pm$ 0.017}$ \\
    CRU &$0.030\scalebox{0.75}{$\pm$ 0.019}$&$0.519\scalebox{0.75}{$\pm$ 0.166}$& $0.807\scalebox{0.75}{$\pm$ 0.035}$ & $5.233\scalebox{0.75}{$\pm$ 0.242}$ & $1.236\scalebox{0.75}{$\pm$ 0.035}$ & $6.735\scalebox{0.75}{$\pm$ 0.139}$ & $6.927\scalebox{0.75}{$\pm$ 0.018}$ & $5.812\scalebox{0.75}{$\pm$ 0.030}$\\
    RKN-$\Delta_t$ &$0.015\scalebox{0.75}{$\pm$ 0.016}$&$0.367\scalebox{0.75}{$\pm$ 0.178}$& $0.680\scalebox{0.75}{$\pm$ 0.042}$ & $4.854\scalebox{0.75}{$\pm$ 0.197}$ & $1.292\scalebox{0.75}{$\pm$ 0.042}$ & $6.820\scalebox{0.75}{$\pm$ 0.081}$ & $8.205\scalebox{0.75}{$\pm$ 0.025}$ & $6.185\scalebox{0.75}{$\pm$ 0.043}$\\
    GRU-$\Delta_t$ &$0.035\scalebox{0.75}{$\pm$ 0.001}$&$0.717\scalebox{0.75}{$\pm$ 0.018}$& $0.449\scalebox{0.75}{$\pm$ 0.018}$ & $4.160\scalebox{0.75}{$\pm$ 0.154}$ & $1.414\scalebox{0.75}{$\pm$ 0.051}$ & $7.226\scalebox{0.75}{$\pm$ 0.086}$ & $6.863\scalebox{0.75}{$\pm$ 0.019}$ & $5.617\scalebox{0.75}{$\pm$ 0.031}$\\
    T-PatchGNN & $0.065\scalebox{0.75}{$\pm$ 0.031}$ & $0.909\scalebox{0.75}{$\pm$ 0.405}$ & $0.338\scalebox{0.75}{$\pm$ 0.032}$ & $3.259\scalebox{0.75}{$\pm$ 0.168}$
    &$1.226\scalebox{0.75}{$\pm$ 0.010}$ & $6.689\scalebox{0.75}{$\pm$ 0.151}$ & $5.664\scalebox{0.75}{$\pm$ 0.004}$ & $4.777\scalebox{0.75}{$\pm$ 0.016}$\\
    GraFITi & $0.074\scalebox{0.75}{$\pm$ 0.015}$ & $0.673\scalebox{0.75}{$\pm$ 0.042}$ & $\best{0.233}\scalebox{0.75}{$\pm$ 0.009}$ & $\best{2.781}\scalebox{0.75}{$\pm$ 0.025}$
    & $1.419\scalebox{0.75}{$\pm$ 0.032}$ & $7.448\scalebox{0.75}{$\pm$ 0.121}$ & $\underline{4.692}\scalebox{0.75}{$\pm$ 0.003}$ & $\best{4.287}\scalebox{0.75}{$\pm$ 0.021}$ \\
    HyperIMTS & $0.052\scalebox{0.75}{$\pm$ 0.013}$ & $0.539\scalebox{0.75}{$\pm$ 0.021}$ & $0.269\scalebox{0.75}{$\pm$ 0.003}$ & $\underline{2.932}\scalebox{0.75}{$\pm$ 0.020}$
    & OOM & OOM & $4.734\scalebox{0.75}{$\pm$ 0.008}$ & $4.364\scalebox{0.75}{$\pm$ 0.015}$\\
    \midrule
    GRUwE
    &$0.015\scalebox{0.75}{$\pm$ 0.013}$&$0.377\scalebox{0.75}{$\pm$ 0.187}$& $\underline{0.261}\scalebox{0.75}{$\pm$ 0.006}$ & $2.957\scalebox{0.75}{$\pm$ 0.035}$ & $\best{0.816}\scalebox{0.75}{$\pm$ 0.026}$ & $\best{5.278}\scalebox{0.75}{$\pm$ 0.036}$ & $\best{4.651}\scalebox{0.75}{$\pm$ 0.015}$ & $\underline{4.296}\scalebox{0.75}{$\pm$ 0.022}$\\
    
    \bottomrule
    \end{tabular}
    \caption{\textbf{Next observation prediction.} Comparison of models on next observation prediction on USHCN, Physionet MIMIC-III-Small and MIMIC-III-Large datasets. We report the mean and standard deviation of MSE and MAE on five distinct random seeds. "OOM" refers to out-of-memory GPU error triggered during training.}
    \label{table:nextObsPrediction}
\end{table*}
We evaluate the proposed GRUwE model on next observation prediction and next event prediction tasks.

\textbf{Next observation prediction.} We evaluate the performance of GRUwE and baselines by predicting the next observation for each variable defining the multivariate time series given the history of past observations. 
For the Physionet dataset, the model observes the first 24 hours of the total 48-hour period to predict the next observations for all (37) variables. Similarly, for the USHCN dataset, with daily samples over four years, we use the first half to predict the next observation for all 5 variables. For the MIMIC-III-Large dataset, models consider values of 506 variables over past 48 hours from a randomly sampled time point in the patient record. The models are compared on predicted values for the next observations made on 363 numerical variables representing vital signs and labs. For the MIMIC-III-Small dataset, the model observes the first half of the 96 numerical variables to predict their next observation. \Cref{table:nextObsPrediction} presents the predictive performance of all models on the next observation prediction task across the four datasets, evaluated over multiple random seeds. Next we briefly summarize the results on individual datasets below.
On the USHCN dataset, ContiFormer, Latent ODE, mTAND are the top performing models in terms of MSE and MAE.
On the Physionet dataset, GraFITi, GRUwE and HyperIMTS are the top performing methods. On the MIMIC-III-Large dataset, GRUwE achieves the lowest MSE and MAE, outperforming all competing models, with GRU-D ranking second among the baselines. GRU-D uses a mean-reverting interpolation scheme for missing inputs, an assumption that likely holds for this setting, which we believe is contributing to its competitive predictive accuracy. We note that MIMIC-III-Large is the most challenging dataset of all the datasets used: it consists of high-dimensional input observations and highly varied observation sequence lengths as reported in \cref{table:seq_len_stats}. On the MIMIC-III-Small dataset, GRUwE outperforms all the baseline models. GraFITi and HyperIMTS remain competitive.

\textbf{Missing data perspective.} 
Prior work \cite{Singh1997Missingness, Ramoni1997Missingness} categorizes missingness into three types: (i) Missing Completely at Random (MCAR), (ii) Missing at Random (MAR), and (iii) Not Missing at Random (NMAR).
In our experiments, the USHCN dataset corresponds to the MCAR setting, where missingness is synthetically introduced by randomly removing observations. In contrast, the Physionet and MIMIC-III datasets are instances of NMAR, where missingness is contextual, i.e., \textit{dependent} on observed and unobserved values. For example, certain laboratory tests may only be ordered after abnormal vital signs are detected. Our results indicate that while GRUwE remains competitive under MCAR, it performs particularly well under NMAR conditions, outperforming all compared models on two out of three datasets. This suggests that GRUwE more effectively captures dependencies inherent in the data-generating process.

\textbf{Next event prediction.}
We report the comparison of the TPP models in the  \Cref{table:intensity-based} and plot the log-likelihoods in \Cref{fig:ll-comparison}. Notably, the FullyNN model \cite{FullyNN} is designed for single event prediction. While RMSE and LL metrics can be computed, the ER metric cannot be evaluated for this model in our comparison.  As demonstrated in \cref{fig:ll-comparison}, GRUwE outperforms all intensity-based models in terms of log-likelihood comparison, except on the Amazon dataset, where it achieves the second-highest performance. For the Taxi dataset, which has the least number of training sequences across all datasets, GRUwE ranks first in terms of the RMSE and ER. If we compare among all RNN-based intensity models (i.e. GRUwE, NHP, RMTPP), GRUwE demonstrates the best performance on all metrics and datasets showing the effectiveness of the proposed approach.

\begin{table*}[ht]
\centering
\tiny
\setlength{\tabcolsep}{2pt}
\begin{tabular}{c | c c | c c | c c | c c | c}
\toprule
& \multicolumn{2}{c|}{\textbf{Taxi}} 
& \multicolumn{2}{c|}{\textbf{Retweet}} 
& \multicolumn{2}{c|}{\textbf{StackOverflow}} 
& \multicolumn{2}{c|}{\textbf{Amazon}}
& \multicolumn{1}{c}{\textbf{Avg.}}\\
\textbf{Model} & RMSE($\downarrow$) & ER($\downarrow$) 
               & RMSE($\downarrow$) & ER($\downarrow$) 
               & RMSE($\downarrow$) & ER($\downarrow$) 
               & RMSE($\downarrow$) & ER($\downarrow$) & Rank ($\downarrow$) \\
\midrule
FullyNN \footnotemark & 0.373\scalebox{0.75}{$\pm$ 0.005} & N.A. 
       & \best{21.92}\scalebox{0.75}{$\pm$0.159} & N.A. 
       & 1.375\scalebox{0.75}{$\pm$0.015} & N.A. 
       & 0.615\scalebox{0.75}{$\pm$0.005} & N.A. & 4.75 \\
NHP    & \best{0.369}\scalebox{0.75}{$\pm$ 0.001} & 9.22\scalebox{0.75}{$\pm$0.005} 
       & 22.32\scalebox{0.75}{$\pm$ 0.001} & \best{40.25}\scalebox{0.75}{$\pm$ 0.002} 
       & \underline{1.369}\scalebox{0.75}{$\pm$0.001} & 55.78\scalebox{0.75}{$\pm$0.007} 
       & \underline{0.612}\scalebox{0.75}{$\pm$0.001}  & 68.30\scalebox{0.75}{$\pm$0.009} & 2.88\\
A-NHP & \underline{0.370}\scalebox{0.75}{$\pm$ 0.000} & 11.42\scalebox{0.75}{$\pm$0.016}
       & 22.28\scalebox{0.75}{$\pm$0.018} & 41.05\scalebox{0.75}{$\pm$0.004} 
       & 1.370\scalebox{0.75}{$\pm$0.000} & \underline{55.51}\scalebox{0.75}{$\pm$0.001} 
       & \underline{0.612}\scalebox{0.75}{$\pm$0.000}  & \best{65.65}\scalebox{0.75}{$\pm$0.002} & 3.38\\
THP    & \best{0.369}\scalebox{0.75}{$\pm$ 0.000} & \underline{8.85}\scalebox{0.75}{$\pm$ 0.003} 
       & 22.32\scalebox{0.75}{$\pm$0.001}  & \best{40.25}\scalebox{0.75}{$\pm$0.002}  
       & \best{1.368}\scalebox{0.75}{$\pm$0.002} & 55.60\scalebox{0.75}{$\pm$0.003} 
       & \underline{0.612}\scalebox{0.75}{$\pm$0.000} & \underline{66.72}\scalebox{0.75}{$\pm$0.009} & \underline{2.13} \\
SAHP   & 0.372\scalebox{0.75}{$\pm$0.003} & 9.75\scalebox{0.75}{$\pm$ 0.001}
       & 22.40\scalebox{0.75}{$\pm$0.301} & 41.60\scalebox{0.75}{$\pm$0.002}  
       & 1.375\scalebox{0.75}{$\pm$0.013} & 56.10\scalebox{0.75}{$\pm$0.005}  
       & 0.619\scalebox{0.75}{$\pm$0.005}  & 67.70\scalebox{0.75}{$\pm$0.006}  & 5.38\\
RMTPP  & \underline{0.370}\scalebox{0.75}{$\pm$0.000}  & 9.86\scalebox{0.75}{$\pm$0.009}  
       & 22.31\scalebox{0.75}{$\pm$0.209} & 44.10\scalebox{0.75}{$\pm$0.003}  
       & 1.370\scalebox{0.75}{$\pm$0.001} & 57.50\scalebox{0.75}{$\pm$0.000} 
       & 0.634\scalebox{0.75}{$\pm$0.011}  & 73.66\scalebox{0.75}{$\pm$0.052} & 5.25\\
\midrule
GRUwE  & \best{0.369}\scalebox{0.75}{$\pm$0.000} & \best{8.58}\scalebox{0.75}{$\pm$0.003} 
       & \underline{22.21}\scalebox{0.75}{$\pm$0.134}  & \underline{40.81}\scalebox{0.75}{$\pm$0.004} 
       & \underline{1.369}\scalebox{0.75}{$\pm$0.001}  & \best{55.34}\scalebox{0.75}{$\pm$0.004}  
       & \best{0.611}\scalebox{0.75}{$\pm$0.001} & 67.24\scalebox{0.75}{$\pm$0.004}  & \best{1.75}\\
\bottomrule
\end{tabular}
\caption{\textbf{Next event prediction.} Model performance comparison for the next event prediction on the Taxi, Retweet, StackOverflow and Amazon datasets. We report the mean and standard deviation of the metrics on five distinct random seeds. For RMSE and ER metrics, lower is better.}
\label{table:intensity-based}
\end{table*}
\footnotetext{Original FullyNN model does not support multi-type event.}
\begin{figure}
    \centering
    \includegraphics[scale=0.30]{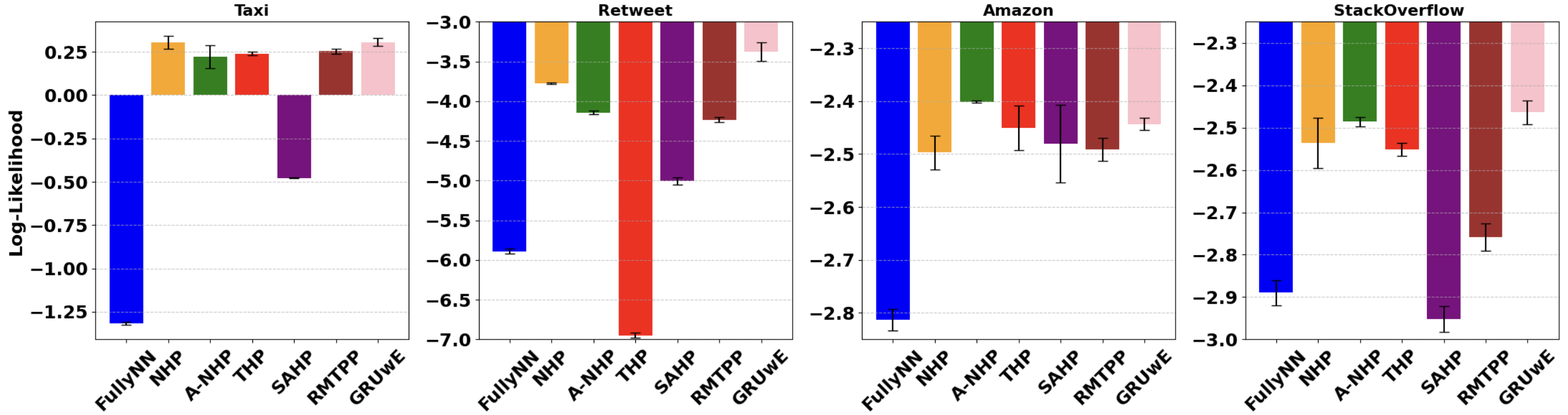}
    \caption{Comparison of Log-Likelihood (LL) for all the TPP models on next event prediction task. A higher value of LL indicates a better fit.}
    \label{fig:ll-comparison}
\end{figure}

\textbf{Rank analysis}. Overall, our results in the \Cref{table:intensity-based} indicate that GRUwE is consistently among the top performing models across all the datasets and evaluation metrics. To assess its benefits more rigorously, we summarize the results by adding a new rank-based metric that calculates the average rank of each model across all datasets for both RMSE and ER metrics. The ranking results show that GRUwE achieves the highest rank score of \textbf{1.75}, while the second-best THP model has the score 2.13. More details on this rank analysis is provided in the \cref{appendix:next_event_prediction}.

\noindent \textbf{Differences to GRU-D \cite{GRUDPaper}.} GRUwE is most closely related to the previously proposed GRU-D model since it also combines GRU and exponential decay functions. However, the following key differences remain. While GRU-D imputes the input missing values by learning exponential schedules to revert to its empirical mean, GRUwE lifts this mean-reverting assumption by applying zero imputation to handle missingness. This reduces cascading errors in the final prediction caused by imputation errors. GRU-D was proposed for classification, and it is unclear how it can be adapted to perform continuous-time predictions. GRUwE addresses this by introducing a flexible prediction module. GRUwE simplifies GRU-D's Markov state representation significantly by removing the need to store and update: i) the last observed variable values, ii) their respective time elapsed since observation, and iii) empirical mean for each variable. It is important to highlight that all of these differences make GRUwE a simpler model compared to GRU-D, while significantly boosting the predictive accuracy across all datasets in the next observation prediction benchmark (\cref{table:nextObsPrediction}).

\noindent \textbf{Computational cost analysis.} Beyond evaluating predictive performance, we assess the computational requirements of each model.
\Cref{fig:computation_cost} summarizes key metrics, including training time, peak memory usage during training, and inference time on retrospective test data. For a focused analysis of deployment efficiency, \Cref{fig:online_inference} compares the average inference time and memory consumption of top-performing models in an online (sequential) setting. A comprehensive discussion on computational complexity is provided in \Cref{section:computational_cost}. \\
Regarding training time on retrospective data (\Cref{fig:train_time}), methods parallelizable over the time dimension (T-PatchGNN, mTAND, GraFITi, HyperIMTS) are fastest. These are followed by RNN-based models (GRUwE, GRU-D, GRU-$\Delta_t$), then models with linear dynamics (RKN-$\Delta_t$, CRU, f-CRU). Lastly, models that require invoking numerical solvers (ContiFormer, Latent ODE, ODE-RNN) consume the most amount of train time. As \Cref{fig:gpu_memory} indicates, this speed often entails a trade-off: time-parallelizable models achieve shorter training times at the cost of substantially higher memory consumption.
For inference on retrospective data where models process the full history at once, most approaches exhibit comparable inference times, with ODE-based methods being notable exceptions (\Cref{fig:inference_time}). The computational profile changes significantly in an online setting, where data arrive sequentially. Here, models that require the full observation history must buffer and reprocess all past data with each new prediction. This leads to growing memory and computational demands over time, as sequence length increases, creating a computational bottleneck. In contrast, GRUwE maintains a compact Markov state updated incrementally, enabling highly efficient sequential inference. As a result, as shown in \Cref{fig:online_inference}, GRUwE incurs significantly lower computational overhead during online deployment.

\begin{figure*}
    \centering
    \begin{subfigure}[b]{0.33\textwidth}
        \centering
        \includegraphics[width=\textwidth]{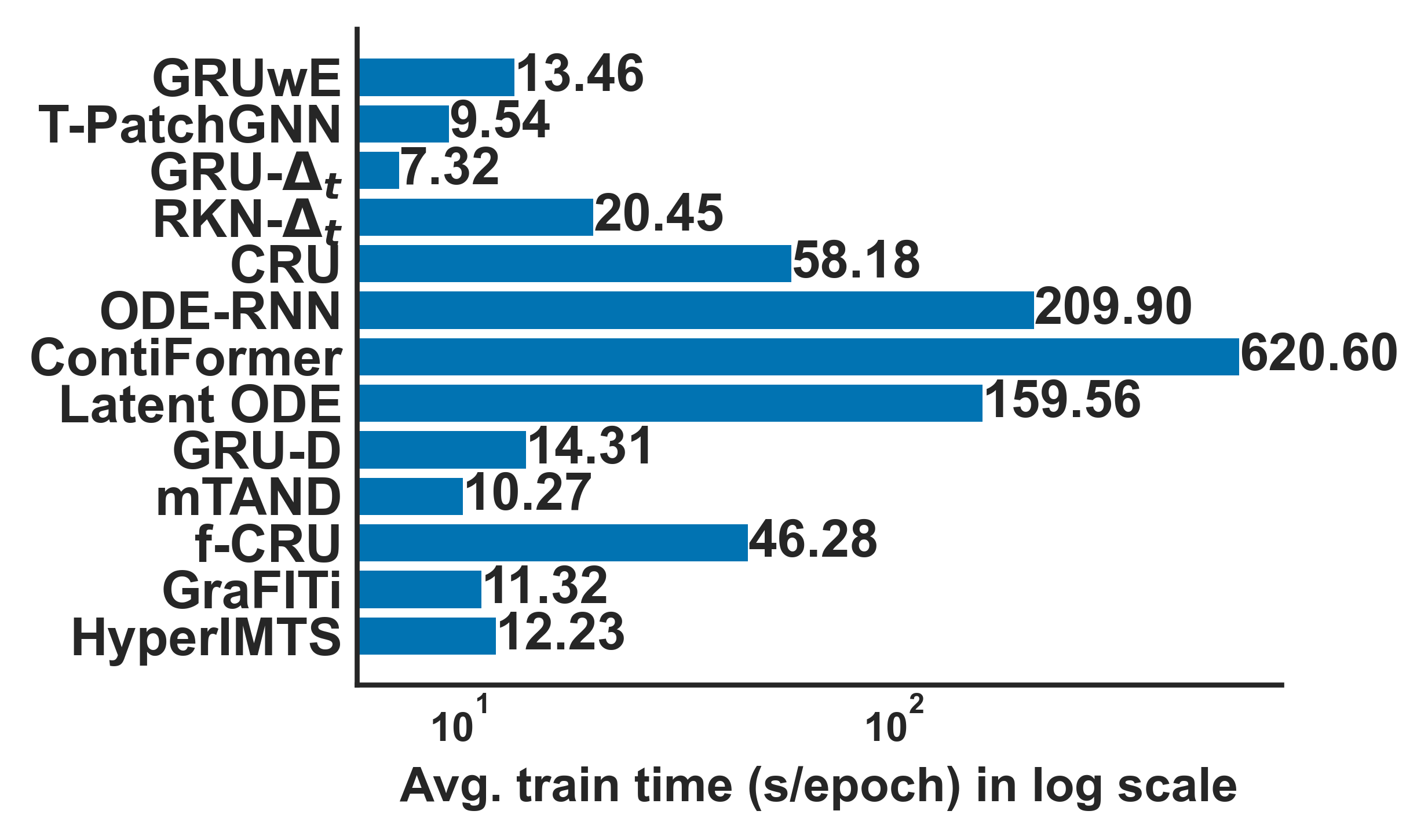}
        \caption{}
        \label{fig:train_time}
    \end{subfigure}
    \hfill
    \begin{subfigure}[b]{0.33\textwidth}
        \centering
        \includegraphics[width=\textwidth]{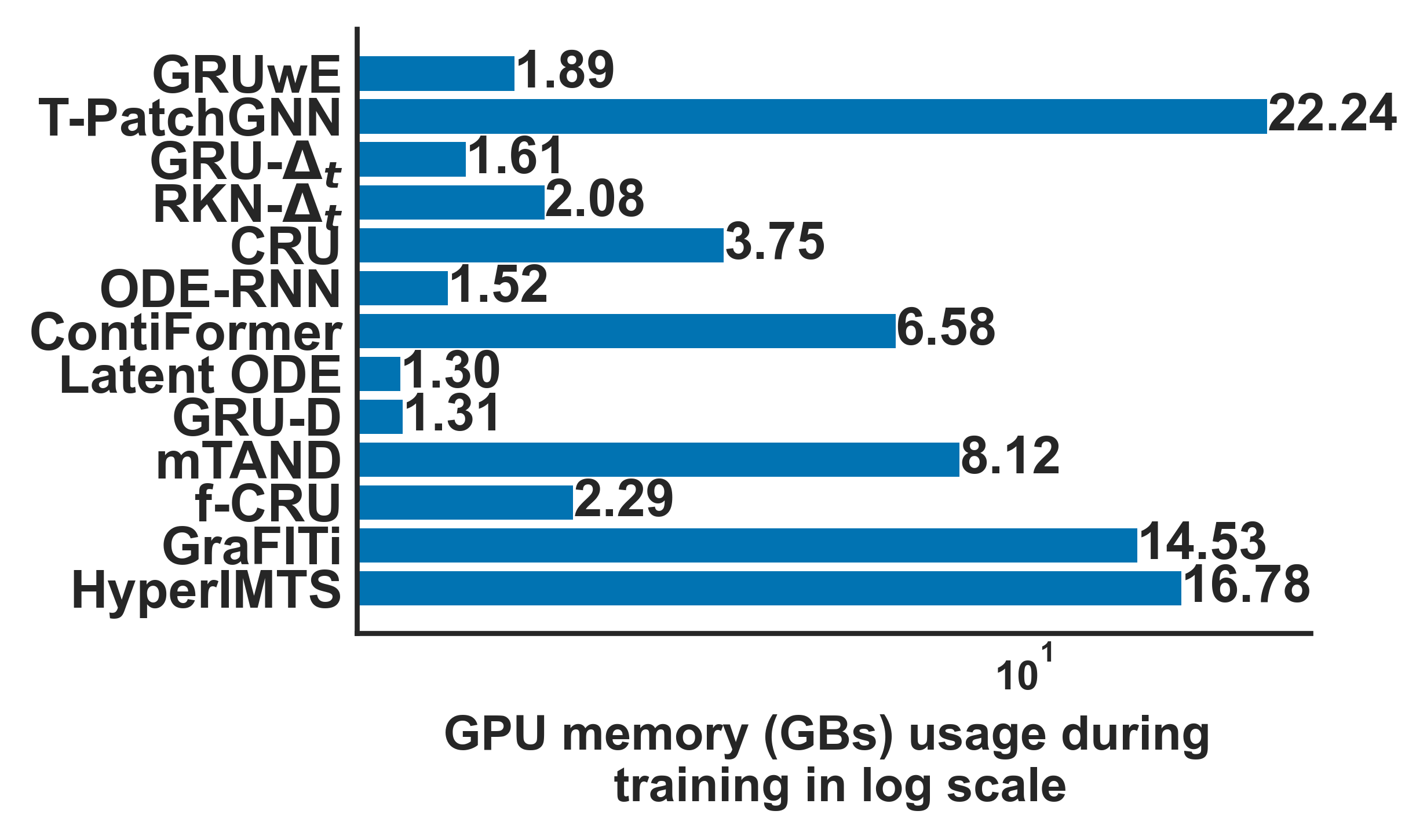}
        \caption{}
        \label{fig:gpu_memory}
    \end{subfigure}
    \hfill
    \begin{subfigure}[b]{0.32\textwidth}
        \centering
        \includegraphics[width=\textwidth]{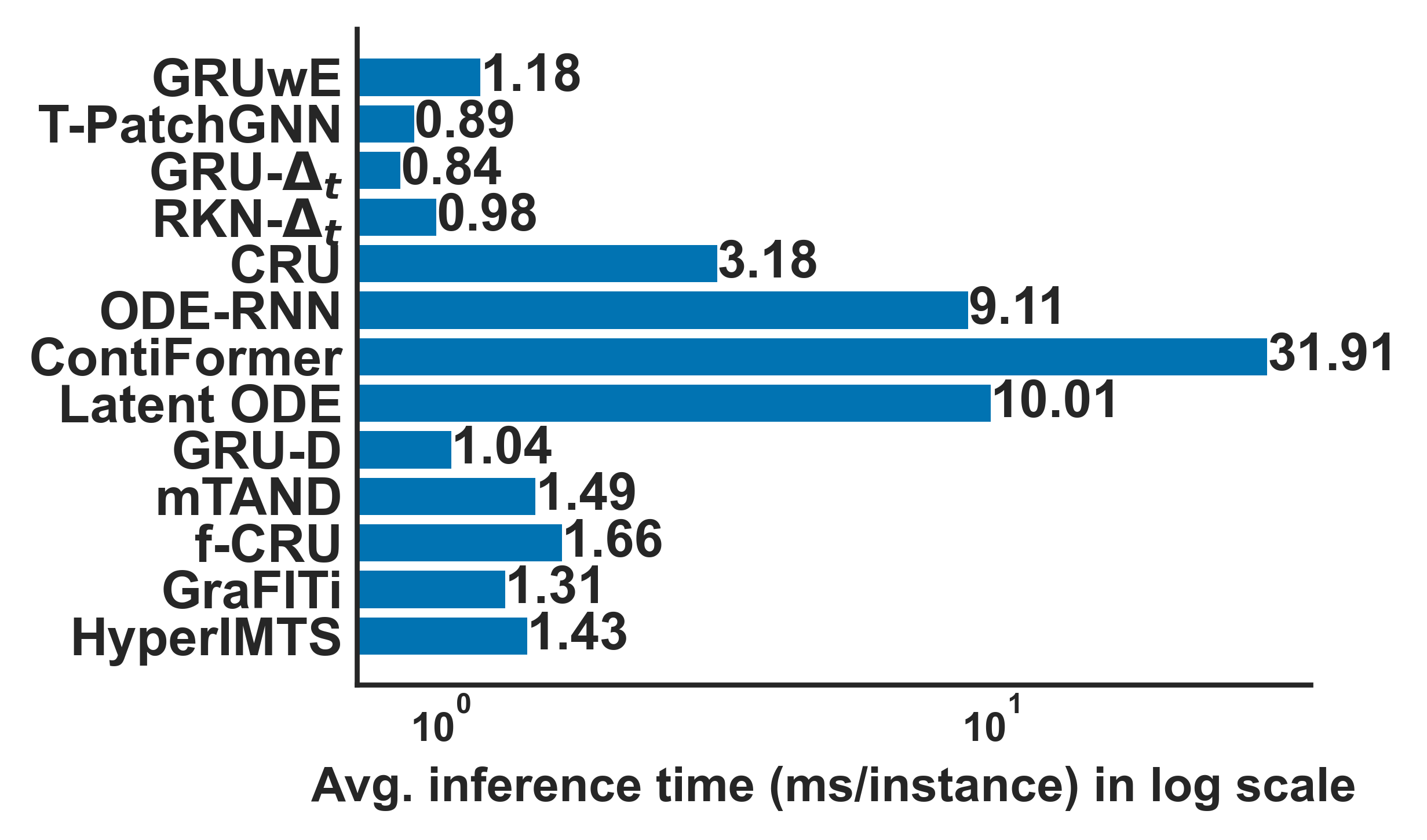}
        \caption{}
        \label{fig:inference_time}
    \end{subfigure}
    \vspace{-1em}
    \caption{\textbf{Computational cost analysis on retrospective data.}  (\textbf{\ref{fig:train_time}}) compares the train times for all models; (\textbf{\ref{fig:gpu_memory}}) compares the peak GPU memory usage during training; \textbf{(\ref{fig:inference_time})} compares the inference times on retrospective data. Lower is better for all plots.}
    \vspace{-1em}
    \label{fig:computation_cost}
\end{figure*}

\begin{figure}[H]
    \centering
    \begin{minipage}{0.45\textwidth}
        \centering
        \includegraphics[width=\linewidth]{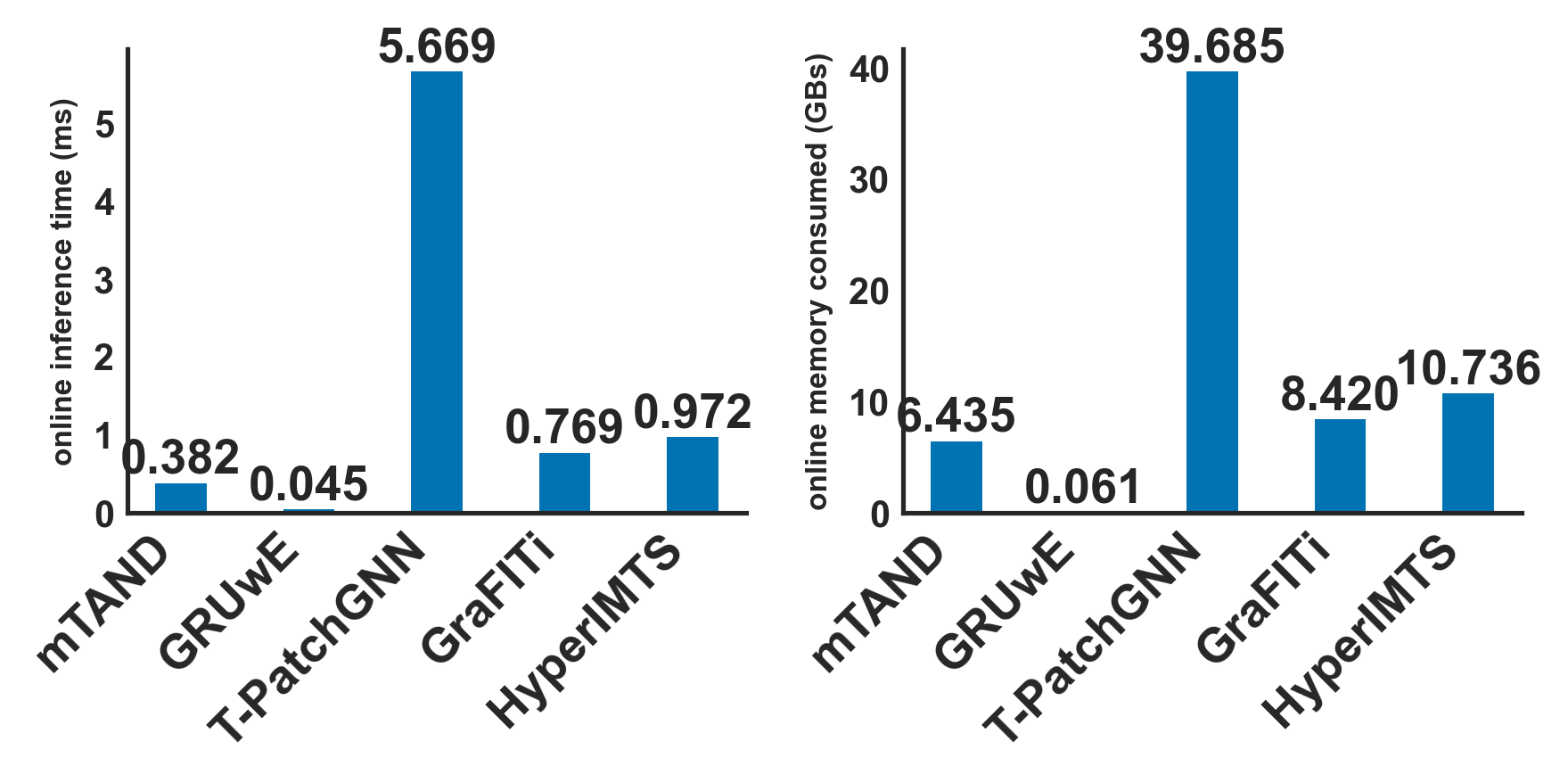}
        \vspace{-1em}
        \subcaption{}
        \label{fig:online_inference}
    \end{minipage}
    \hfill
    \begin{minipage}{0.45\textwidth}
        \centering
        \vspace{-1em}
        \vspace{0.5em} 
        {\small
        \begin{tabular}{@{}l>{\raggedright\arraybackslash}p{0.7\linewidth}@{}}
        \toprule
        \textbf{Step} & \textbf{Code} \\
        \midrule
        Time decay & \verb|gamma = exp(-relu(W@T))| \\
        Decay state & \verb|g_t = gamma * h_prev| \\
        Concat input & \verb|x_mask = [x, mask]| \\
        GRU update & \verb|h_t = GRU(x_mask, g_t)| \\
        \bottomrule
        \end{tabular}}
        \subcaption{}
        \vspace{-1em}
        \label{fig:gru_we_pseudocode}
    \end{minipage}
    \vspace{-1em}
    \caption{(\textbf{a}) Comparison of inference time (\textbf{left}) and memory consumption (\textbf{right}) in online deployment. Lower is better in both cases. (\textbf{b}) GRUwE's step function pseudocode.}
\end{figure}
\textbf{Easy to implement.}
As shown in \cref{fig:gru_we_pseudocode}, GRUwE's reset state mechanisms can be implemented in just a few lines of code as it builds on the components that are natively supported by most modeling packages. An easy model implementation is preferred not only because it improves readability, but also reduces the scope for introducing bugs, making the model easier to port, reproduce, debug, and maintain.

\textbf{GRUwE ablation analysis.}
To better assess the contribution of the proposed GRUwE model, we conduct a series of architectural ablation studies. Specifically, we investigate: (i) how GRUwE compares to variants that employ alternative time basis functions, such as learnable linear (GRU-Lin) or sinusoidal (GRU-Sin) basis; (ii) how the predictive performance changes when model complexity is increased by using \textit{separate} learnable exponential function for the transition and prediction functions (GRUwE-DualExp); and (iii) how performance is affected when model complexity is reduced by \textit{freezing} the randomly initialized exponential basis (GRUwE-FrozenExp) during model training. Across all datasets (as presented in \cref{tab:gruwe_ablations}), GRUwE consistently matches or outperforms its ablated variants, highlighting the importance of a shared, learnable exponential basis. Replacing the exponential basis with linear or sinusoidal functions degrades performance significantly. This supports our hypothesis that exponentials provide a better inductive bias, as their memoryless, multiplicative form makes temporal effects naturally composable over arbitrary time gaps. Learning separate exponential basis for the transition and prediction functions helps in one but does not outperform GRUwE in most cases, suggesting that sharing a single temporal basis regularizes the training and yields a more robust temporal representation. Lastly, reducing the complexity by freezing the exponential basis yields competitive but worse performance in all cases, indicating that adapting decay rates to the task is important.
\begin{table}[t]
    \centering
    \footnotesize
    \begin{tabular}{l|c|c|c|c}
        \toprule
        \textbf{Model}
        & \shortstack{\textbf{USHCN} \\ (MSE $\times 10^{-2}$)}
        & \shortstack{\textbf{Physionet} \\ (MSE $\times 10^{-2}$)}
        & \shortstack{\textbf{MIMIC-III-Large} \\ (MSE $\times 10^{-2}$)}
        & \shortstack{\textbf{MIMIC-III-Small} \\ (MSE $\times 10^{-1}$)} \\
        \midrule
        GRU-Lin & 0.019 & 0.925 & 1.532 & 7.573 \\
        GRU-Sin & 0.016 & 0.359 & 1.216 & 7.324 \\
        GRUwE-DualExp & \best{0.013} & 0.269 & 0.854 & 4.670 \\
        GRUwE-FrozenExp & 0.016 & \underline{0.256} & \underline{0.848} & \underline{4.665} \\
        \midrule
        GRUwE & \underline{0.014} & \best{0.253} & \best{0.831} & \best{4.650} \\
        \bottomrule
    \end{tabular}
    \caption{\textbf{GRUwE ablation comparison}. GRUwE variants modify the time basis function: using linear or sinusoidal functions, learning separate basis for transition and prediction, and freezing the randomly initialized exponential basis.}
    \label{tab:gruwe_ablations}
\end{table}

\textbf{Limitations.}
Unlike compared baselines such as mTAND and CRU, our proposed architecture, GRUwE doesn't have a built-in notion of uncertainty, and we consider incorporating it in the future work will be an exciting extension of our current work.

\section{Conclusion}
In summary, this work revisits the challenge of modeling irregularly sampled multivariate time series through the lens of simplicity and computational efficiency. By introducing GRUwE, we demonstrate that a principled extension of the GRU architecture, augmented with learnable exponential basis functions, can achieve state-of-the-art predictive performance across both continuous-time forecasting and event prediction tasks, while remaining significantly computationally efficient during inference in the online deployment. These results demonstrate that carefully designed recurrent architectures grounded in classical principles, can compete or outperform more complex neural and differential equation–based models. We believe this work provides useful guidance for future research on practical and scalable modeling of irregular time series.


\section{Acknowledgement}

The research work in this paper was supported in part by NIH grant R01 EB032752, and by the University of Pittsburgh Center for Research Computing and Data resource RRID:SCR 022735 using HTC cluster, which is supported by NIH award S10OD028483. The content of this paper is solely the responsibility of the authors and does not necessarily represent the official views of the NIH.

\bibliography{main}
\bibliographystyle{tmlr}

\newpage

\renewcommand{\partname}{} 
\renewcommand{\thepart}{}  
\addcontentsline{toc}{section}{Appendix} 
\part{Appendix} 
\parttoc

\setcounter{section}{0}                 
\renewcommand\thesection{\arabic{section}}

\section{Theory}
\subsection{Lipschitz Continuity of Exponential Decay Function}
\label{appendix:lipschitz_cont}

\begin{theorem}[\textbf{Lipschitz Continuity of Exponential Decay Function}]
Let $\gamma: \mathbb{R}_+ \rightarrow \mathbb{R}$ be the exponential decay function defined as
\[
\gamma(\Delta T) = \exp\left\{ -\max\left(0, W \Delta T + b \right) \right\},
\]
where $W > 0$ and $b \in \mathbb{R}$ are fixed scalar parameters. Then, $\gamma$ is Lipschitz continuous on $\mathbb{R}_+$, with Lipschitz constant
\[
L = W \cdot \exp(-b).
\]
\end{theorem}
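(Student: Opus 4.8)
The plan is to reduce the Lipschitz estimate to a uniform bound on the derivative of $\gamma$. Since $\gamma$ is continuous on $\mathbb{R}_+$ and differentiable everywhere except at the single ``kink'' $\Delta T_0 = -b/W$ where the argument of the $\max$ changes sign, I would first invoke the standard fact that a continuous function whose derivative is bounded by $L$ wherever it exists is $L$-Lipschitz; the whole proof then comes down to computing $\gamma'$ and bounding it by $W\exp(-b)$.

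First I would differentiate $\gamma$ on each of the two regions determined by the sign of $W\Delta T + b$. On the region $W\Delta T + b \le 0$ the inner $\max$ equals $0$, so $\gamma \equiv 1$ and $\gamma' = 0$. On the region $W\Delta T + b > 0$ we have $\gamma(\Delta T) = \exp\{-(W\Delta T + b)\}$, and the chain rule gives $\gamma'(\Delta T) = -W\exp\{-(W\Delta T + b)\}$, so that $|\gamma'(\Delta T)| = W\exp\{-(W\Delta T + b)\}$.

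Next comes the key estimate. For every $\Delta T \in \mathbb{R}_+$ and $W > 0$ we have $W\Delta T + b \ge b$, hence $\exp\{-(W\Delta T + b)\} \le \exp(-b)$, which yields $|\gamma'(\Delta T)| \le W\exp(-b) = L$ on the exponential branch and trivially $0 \le L$ on the constant branch. Thus $\sup_{\Delta T \in \mathbb{R}_+} |\gamma'(\Delta T)| \le L$. I would also note that this bound is tight when $b > 0$ (attained at $\Delta T = 0$) and merely an over-estimate when $b \le 0$, where the sharp constant is $W$; in all cases $W\exp(-b)$ remains a valid Lipschitz constant.

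The main obstacle is the non-differentiability at $\Delta T_0 = -b/W$, which lies inside $\mathbb{R}_+$ precisely when $b \le 0$, so the mean value theorem cannot be applied directly to an interval straddling it. I would resolve this by taking arbitrary $y < x$ in $\mathbb{R}_+$, splitting $[y,x]$ at $\Delta T_0$ when it falls inside, applying the mean value theorem on each $C^1$ subinterval to bound the increment by $L$ times its length, and recombining via the triangle inequality; equivalently, one observes that $\gamma$ is absolutely continuous so that $\gamma(x) - \gamma(y) = \int_y^x \gamma'(t)\,dt$ and bounds the integrand by $L$. Either route delivers $|\gamma(x) - \gamma(y)| \le W\exp(-b)\,|x - y|$, which is the claimed Lipschitz bound.
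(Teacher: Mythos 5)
Your proof is correct and follows essentially the same route as the paper's: compute the piecewise derivative of $\gamma$, bound its magnitude by $W\exp(-b)$, and invoke the fact that a continuous function whose derivative is bounded almost everywhere is Lipschitz with that bound. If anything, you are more careful than the paper: you justify crossing the kink at $\Delta T_0 = -b/W$ explicitly (mean value theorem on the two subintervals, or absolute continuity), and you correctly observe that $W\exp(-b)$ is sharp only when $b>0$ --- for $b<0$ the true supremum of $|\gamma'|$ is $W$, so the paper's asserted equality $L=\sup_{\Delta T\ge 0}\left|\gamma'(\Delta T)\right|=W\exp(-b)$ is an overstatement in that regime, although the claimed Lipschitz bound itself remains valid.
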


\begin{proof}
We first compute the derivative of $\gamma$ with respect to $\Delta T$ in piecewise form:
\[
\frac{d}{d\Delta T} \gamma(\Delta T) =
\begin{cases}
0, & \text{if } \Delta T  \leq -\frac{b}{W}, \\
- W \cdot \exp\left( -W \Delta T - b \right), & \text{if }  \Delta T  > -\frac{b}{W}.
\end{cases}
\]

Observe that $\gamma$ is continuous everywhere and differentiable almost everywhere (it is non-differentiable only at the point $\Delta T = -\frac{b}{W}$). The function $\gamma$ is Lipschitz continuous if its derivative is bounded almost everywhere, and the essential supremum of the absolute value of the derivative gives the Lipschitz constant.

For $W \Delta T + b > 0$, the magnitude of the derivative is:
\[
\left| \frac{d\gamma}{d\Delta T} \right| = W \cdot \exp(-W \Delta T - b).
\]

The maximum of this expression over $\Delta T \in \mathbb{R}_+$ occurs at $\Delta T = 0$, giving:
\[
\left| \frac{d\gamma}{d\Delta T} \right| \leq W \cdot \exp(-b).
\]

For $\Delta T < -\frac{b}{W}$, the derivative is zero. Therefore, the global Lipschitz constant is:
\[
L = \sup_{\Delta T \geq 0} \left| \frac{d\gamma}{d\Delta T} \right| = W \cdot \exp(-b).
\]
\end{proof}

\subsection{Monotonic Latent Dynamics}
\begin{remark}[\textbf{Exponential Basis Functions Induces Growth and Decay}]
    Exponential basis functions can induce exponentially growing and decaying trends in the state as a function of time elapsed.
\end{remark}

\begin{figure}[H]
    \centering
    \includegraphics[width=0.35\linewidth]{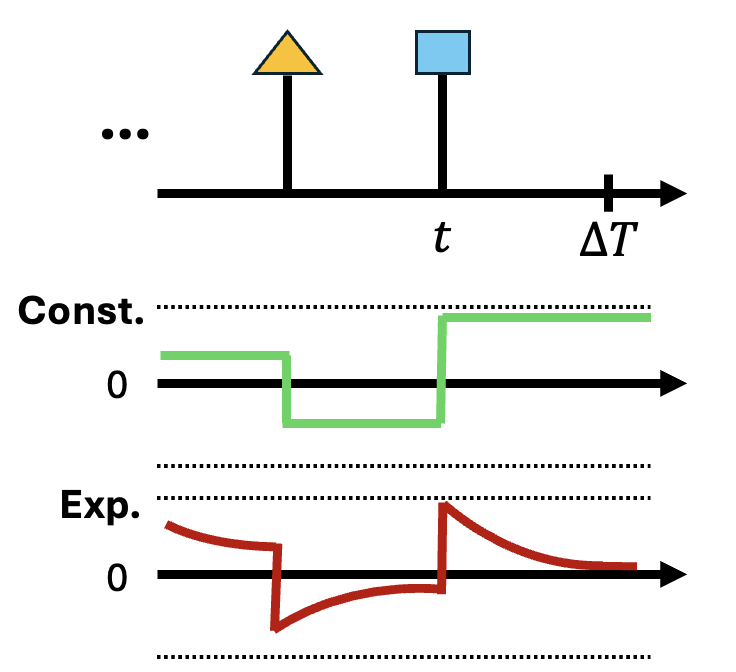}
    \caption{An abstract diagram of latent dynamics in the case of GRU (\textbf{top}) and GRUwE (\textbf{bottom}) highlighting that GRUwE can recover monotonic latent dynamics between two consecutive event or observation arrivals.}
    \label{fig:placeholder}
\end{figure}

We note that $\bm{\gamma(\Delta T)} \in (0,1]^D$, the $i^{th}$ component of state is bounded, i.e., $h_{t,i} \in [-1,1]$. Since $\textbf{g}_{t, \Delta T} = \bm{\gamma(\Delta T}) \odot \textbf{h}_{t}$, if $h_{t,i} \in [-1, 0)$, then ${h}_{t+\Delta T, i} \geq h_{t,i}$. Basically, the transformed state value is an upper-bound of the initial value if it is negative. If the initial value is positive, then the transformed value is a lower-bound: i.e.,  $h_{t,i} \in (0, 1]$, then ${h}_{t+\Delta T, i} \leq h_{t,i}$. Therefore, depending on the state $\textbf{h}_t$, the exponential basis function update can cause the state value to grow or decay exponentially. This property makes exponential basis functions highly expressive for time-series modeling.

\section{Datasets}
\label{section:dataset-description}
\subsection{USHCN}
United States Historical Climatology Network (USHCN) \cite{USHCNDataset} is a publicly available dataset (\url{https://data.ess-dive.lbl.gov/view/doi%3A10.3334%2FCDIAC%2FCLI.NDP019}) consisting of daily measurements of 5 meteorological variables including min temperature, max temperature, precipitation, snowfall, and snow depth from 1218 observing stations across the United States. It is worth noting that there are multiple versions of datasets extracted from USHCN that have been used to evaluate irregularly sampled tim series. We extract the dataset in our analysis using the steps mentioned in \cite{CRUPaper}. We mimic their data pre-processing pipeline by: i) sub-selecting 1168 stations over a 4-year period ranging from 1990 to 1993 ii) subsample 50\% of time-points to increase irregularities in time dimension; and setting unobserved rate to 20\% to increase sparsity of observations, iii) 20\% of the entire dataset is used for testing; we train and validate on the remaining 80\% data; 25\% of that is used for validation.

To be able to compare to methods listed in the \cite{CRUPaper} for the prediction task, we replicate their: (1) pre-processing logic; (2) splitting of dataset into train, validation and test sets by using the same seed; (3) 20\% partial observability in feature dimension and 50\% in time dimension in GRUwE run; (4) masking logic for prediction task.

\subsection{Physionet}
Predicting Mortality of ICU Patients: The PhysioNet/Computing in Cardiology Challenge 2012 \cite{PhysionetDataset} made publicly available (\url{https://physionet.org/files/challenge-2012/1.0.0/}) 8000 ICU patient stays that span 48 hours reporting 37 clinical real-valued time series variables observed at irregular time-intervals. The dataset includes various variables, including Non-Invasive Mean Arterial Pressure, Platelets, Sodium, and several others. We follow the pre-processing of this dataset described by \cite{CRUPaper}. Observations in time are rounded by 6 minutes. Similar to USHCN, test split consists of 20\% of the data, the rest is used for training and validation; validation set consists of 25\% of this split.

For fair comparison to methods listed in \cite{CRUPaper}, we ensure (1) 6-minute quantization (as done in \cite{LatentODEPaper, CRUPaper}); (2) splits are created using the same seeds; (3) same masking logic is applied for prediction task.

\subsection{MIMIC-III-Large}
In the following, we outline the steps to extract MIMIC-III-Large dataset from the MIMIC-III \cite{MIMICIIIDataset} (\url{https://physionet.org/content/mimiciii/1.4/}). A population cohort of 10,265 hospital admissions (on 8,799 patients) are extracted from MIMIC-III based on the following criteria: i) patient record is recorded in MetaVision critical care information system, ii) the length of the patient record is between 2 and 20 days, iii) the age of the patient is between 18 and 90. 
From all EHR tables available in MIMIC-III, we extract (irregularly sampled) time series that include vital signs (such as Heart Rate and Mean Arterial Pressure), lab results (such as Glucose and Hemoglobin), administered medications (such as Propofol and Norepinephrine), and procedures (such as Intubation). The vital signs and lab results are numerical time series, while the rest are indicator time series, indicating if and when the event occurred. 

We filter out any univariate time series that occurs less than 500 times across all patients in the cohort resulting in total of 506 time series: 393 numerical (vitals and labs), 77 medications and 36 procedures event time series. We define an \textit{A-point}, abbreviated for Anchor-point, as a temporal moment at which a decision-making system can formulate a prediction based on the past sequence of events. We extract A-points from the filtered patient records regularly with frequency of 24 hours i.e. one sample is extracted every 24 hours from the patient record. 
We standardize all real-valued univariate time series (i.e. vital signs and labs) data using min-max scaling; and encode all other indicator time series as binary value 0/1. Value is 1 if event occurs; 0 otherwise. 80\% of the patient hospital admission are used for training and validation (20\% of train split); the rest is used for testing. Splits are constructed on disjoint patients.

Additional pre-processing is required to remove missing values encoded as 9999999 in the numerical time series. To remove outliers from univariate time series (for example, very large values of the order of $1e5$), we filter out observations that fall either in  $<0.1$ or $>99.9$ percentile ranges.
Subsampling of A-points is performed as follows:
\begin{itemize}
    \item[1.] For each split:
    \begin{itemize}
        \item[a.] For each patient admission record, filter out A-points with less than 50 events in history and prediction window.
        \item[b.] Next, randomly sample one A-point from the filtered A-points i.e. one sample per patient admission record.
    \end{itemize}
    \item[2.] Subsample 1000 A-points from train, 250 from validation and 200 from test set to be used for experimentation.
\end{itemize}

\subsection{MIMIC-III-Small}
We adopt the data extraction methodology described in \cite{GRUODEBayes} to derive the MIMIC-III-Small dataset from MIMIC-III \cite{MIMICIIIDataset}. Each sample comprises 96 time series variables, irregularly observed over a 48-hour long patient visit. For a comprehensive overview of the dataset pre-processing and extraction steps, we refer readers to Appendix K in \cite{GRUODEBayes}.

\subsection{Dataset Attributes}
\cref{table:seq_len_stats} summarizes the mean, standard deviation, minimum and maximum of all the sequences per dataset. Importantly, the MIMIC-III dataset does not undergo any time discretization, resulting in sequence lengths spanning from 105 to 4299. In terms of average length, the order is as follows: USHCN $>$ MIMIC-III-Large $>$ Physionet $>$ MIMIC-III-Small.

\begin{table}[H]
\centering
\caption{Sequence length statistics and number of target variables across datasets}
\begin{tabular}{lcccc}
\toprule
\multirow{2}{*}{Dataset} & \multicolumn{3}{c}{Sequence length statistics} & \multirow{2}{*}{\# Target Variables} \\
\cmidrule{2-4}
 & Mean $\pm$ S.D. & Min & Max & \\
\midrule
USHCN & 730.0 $\pm$ 0.00 & 730 & 730 & 5 \\
Physionet & 72.16 $\pm$ 20.93 & 1 & 185 & 37 \\
MIMIC-III-Small & 48.15 $\pm$ 69.92 & 2 & 614 & 96 \\
MIMIC-III-Large & 244.30 $\pm$ 288.85 & 105 & 4299 & 393 \\
\bottomrule
\end{tabular}
\label{table:seq_len_stats}
\end{table}

\subsection{Event Prediction Datasets}
We provide descriptions of the four real-world publicly available datasets used in the next even prediction benchmark of the paper. We summarize the properties of these datasets after providing the descriptions:

\begin{itemize}
    \item \textbf{Amazon:} This dataset includes time-stamped user product reviews behavior from January, 2008 to October, 2018. Each user has a sequence of produce review events with each event
containing the timestamp and category of the reviewed product, with each category corresponding
to an event type. We work on a subset of 5200 most active users with an average sequence length
of 70 and then end up with K = 16 event types
    \item \textbf{Retweet:} This dataset contains time-stamped user retweet event sequences. The events are categorized into K = 3 types: retweets by “small,” “medium” and “large” users. Small users have fewer than 120 followers, medium users have fewer than 1363,
and the rest are large users. We work on a subset of 5200 most active users with an average
sequence length of 70.
    \item \textbf{Taxi:} This dataset tracks the time-stamped taxi pick-up and drop-off events across
the five boroughs of the New York City; each (borough, pick-up or drop-off) combination defines
an event type, so there are K = 10 event types in total. We work on a randomly sampled subset
of 2000 drivers and each driver has a sequence. We randomly sampled disjoint train, dev and test
sets with 1400, 200 and 400 sequences.
    \item \textbf{StackOverflow:} This dataset has two years of user awards on a
question-answering website: each user received a sequence of badges and there are K = 22
different kinds of badges in total. We randomly sampled disjoint train, dev and test sets with
1400, 400 and 400 sequences from the dataset.
\end{itemize}
\noindent \Cref{tab:stats_dataset} provides a description of number of unique event types ($K$), the number of events and sequence length statistics across all four datasets used in our experimental evaluation.

\begin{table}[H]
	\centering
        \caption{Statistics of the used datasets for evaluation}
	\begin{tabular}{l r r r r r r r}
		\toprule
		Dataset & \multicolumn{1}{r}{$K$} & \multicolumn{3}{c}{\# of Event Tokens} & \multicolumn{3}{c}{Sequence Length} \\
		\cmidrule(lr){3-8}
		&  & Train & Dev & Test & Min & Mean & Max \\
		\midrule
		Retweet & $3$ & 369000 & 62000 & 61000 & 10 & 41 & 97 \\			
		Amazon & $16$ & 288000 & 12000 & 30000 & 14 & 44 & 94 \\
		Taxi & $10$ & 51000 & 7000 & 14000 & 36 & 37 & 38 \\
		StackOverflow & $22$ & 90000 & 25000 & 26000 & 41 & 65 & 101 \\
		\bottomrule
	\end{tabular}
	\label{tab:stats_dataset}
\end{table}

\section{Baselines}
\label{section:baseline-methods}
\subsection{Baselines for Next Observation Prediction Task}
We include the following baseline methods to compare against our proposed model GRUwE on the next observation prediction task.

\textbf{GRU-$\Delta_t$:}
We consider recurrent models GRU-$\Delta_t$ as our baselines in the comparison. Gated Recurrent Unit \cite{GRUPaper} have been proposed to model sequences using a set of parametric update equations. Since GRU is not time-aware, a variant of GRU that also feeds in time elapsed (since the last time-step) along with the input GRU-$\Delta_t$ is used in the comparison.

\textbf{RKN-$\Delta_t$:} Recurrent Kalman Networks (RKN) \cite{RKNPaper} have been proposed to incorporate uncertainty in time series modeling. Similar to GRU-$\Delta_t$, we include the baseline RKN-$\Delta_t$ that includes time elapsed as an additional input to the model.

\textbf{GRU-D:}
We compare our method to Gated Recurrent Unit-Decay (GRU-D) \cite{GRUDPaper} that uses a mean-reverting imputation function for missing variables; applies learnable exponential decays in the input and latent dimension to account for irregular observed times. 

\textbf{mTAND:}
We evaluate our performance against the Encoder-Decoder generative model, Multi-Time Attention Network (mTAND-Full) \cite{mTANDPaper}, which defines reference points and represents continuous time-points with learnable embeddings to encode their relationship and generate predictions.

\textbf{CRU and f-CRU:} 
We consider Continuous Recurrent Units (CRU) and fast-Continuous Recurrent Units (f-CRU) \cite{CRUPaper} as our baselines for prediction tasks. f-CRU is a fast implementation of jointly proposed CRU method. CRU consists of an encoder-decoder framework where the hidden state progression is governed by linear stochastic differential equation that allow incorporating arbitrary time-intervals between observations.

\textbf{ODE-RNN and Latent ODE:}
In our analysis, we consider Latent ODE and ODE-RNN proposed in \cite{LatentODEPaper} as comparative baselines. ODE-RNN model consist of latent states that adhere to ODE between observations and
are updated at observations using standard RNN update equations. Latent ODE model adopts variational auto-encoder framework wherein the hidden state posterior is modeled by an ODE-RNN model. We used the configuration of Latent ODE with ODE-RNN encoder.


\textbf{ContiFormer}: We incorporate ContiFormer \cite{ContiFormer} as one of our baselines for the prediction tasks. It builds upon the original transformer architecture by first extending the input irregular data to the continuous-time latent representation by assuming that the underlying dynamics are governed by the ODEs.

\textbf{T-PatchGNN}: We include T-PatchGNN \cite{TPGNN} as one of our baselines for the prediction tasks. T-PatchGNN first segments each time series into patches of uniform temporal resolution followed by the transformer and time-adaptive GNNs to capture dependencies in the multivariate time series. We use the official T-PatchGNN code made publicly available here: \url{https://github.com/usail-hkust/t-PatchGNN} in our experiment pipeline.

\textbf{GraFITi}: We add GraFITi \cite{grafitiPaper} as one of our baselines for comparisons on the next observation prediction task. GraFITi casts the time series prediction task in terms of edge weight prediction problem after converting the time series to a sparse graph structure. To incorporate this method, we make use of the official implementation available here: \url{https://github.com/yalavarthivk/GraFITi}.

\textbf{HyperIMTS:} We include HyperIMTS \cite{HyperIMTS} as one of baselines for the next observation prediction task. HyperIMTS models the irregular time series by constructing a hypergraph from observations and learning dependencies with the help of hyperedges connecting the observation nodes. We use the official implementation made available by the authors from this site: \url{https://github.com/Ladbaby/PyOmniTS}.

\subsection{Baselines for Event Prediction Task}
We include the following temporal point process baselines for event prediction benchmark.\\
\textbf{FullyNN}: We add Fully Neural Network (FullyNN) \cite{FullyNN} to our comparison on the next event prediction task. FullyNN learns a cumulative hazard function for modeling the point process intensities. We use the publicly available implementation of this method from here: \url{https://github.com/ant-research/EasyTemporalPointProcess}\\
\textbf{NHP}: We include Neural Hawkes Process (NHP) \cite{NeuralHawkes} as one of our baselines for the next event prediction task. NHP develops a continuous-time intensity model using the LSTM architecture. We use the publicly available implementation of this method from here: \url{https://github.com/ant-research/EasyTemporalPointProcess} \\
\textbf{A-NHP}: We include Attention-Neural Hawkes Process (A-NHP) \cite{AttNHP} as a baseline model for next event prediction task. A-NHP with the help of attention mechanism further refines the NHP architecture to define novel intensity functions. Official code for this method is publicly available at \url{https://github.com/yangalan123/anhp-andtt}. We used the adapted versions from: \url{https://github.com/ant-research/EasyTemporalPointProcess}.\\
\textbf{THP}: Transformer Hakwes Process (THP) \cite{TransformerHawkes} is included in the comparison for next event prediction task. THP defines the intensity function using the transformer architecture. We use the publicly available implementation of this model from: \url{https://github.com/ant-research/EasyTemporalPointProcess}.  \\
\textbf{SAHP}: We add Self-Attention Hawkes Process (SAHP) \cite{SAHP} to the next event prediction comparison. SAHP uses self-attention mechanism in tranformers to define intensity function. We use the publicly available implementation of this model from: \url{https://github.com/ant-research/EasyTemporalPointProcess}. \\
\textbf{RMTPP}: Recurrent Marked Temporal Point Process (RMTPP) \cite{RMTPP} defines a conditional intensity model based on RNN architecture. We include it in the next event prediction benchmark. We use the code available for this method in the public repository: \url{https://github.com/ant-research/EasyTemporalPointProcess}.\\

\section{Generic hyperparameters for Next Observation Prediction Task}
\label{section:tacd-gru-hopt}
The following hyperparameters are applicable broadly across our forecasting experiments:
\begin{itemize}
    \item For all models, we use Adam optimizer \cite{AdamPaper}.
    \item For all models, we apply exponential learning rate decay of 0.99 and perform gradient clipping using max $l^2$-norm=1.
    \item For all GRUwE configurations, we use the same architecture for the hidden to observation function $F_{out}(\textbf{g}_{t, \Delta T})$: Linear(hidden\_dim, target\_dim).
\end{itemize}

\section{Hyperparameters for Next Observation Prediction Task on USHCN}
\label{section:baseline-hopt-uschn}
We keep batch size fixed to 50, the number of training epochs to 100, learning rate decay to 0.99, with gradient clipping and perform a hyperparameter search for each model as follows:
\subsubsection{mTAND}
For mTAND, we perform a grid search over time embedding dimension = $\{32, 64, 128\}$, latent state dimension =$\{8,10,16,20\}$, number of reference points=$\{32, 64, 128\}$ and learning rate =$\{0.1, 0.05, 0.01, 0.001\}$. Of which, latent state dimension=$8$, number of reference points=32 and learning rate of $0.01$ performs the best on the validation data.
\subsubsection{GRU-D}
For GRU-D, we perform a search over latent state dimension = $\{8,10,16,20\}$ and learning rate $=\{0.1, 0.05, 0.01, 0.001\}$. We find that configuration with latent state dimension=$20$ and learning rate=$0.01$ performs best on the validation set.

\subsubsection{CRU}
Fixed hyperparameters for CRU are: 
variance activation for encoder='square', decoder='exp', transition='relu'
encoder variance activation='square', decoder variance activation='exp', number of basis matrices=20, and the same encoder and decoder network architecture as used in \cite{CRUPaper}.
We perform a search on latent state dimension=$\{8,10,16,20\}$ and learning rate $=\{0.1, 0.05, 0.01, 0.001\}$. We report that the latent dimension=$10$ and learning rate=$0.05$ performs the best on validation set.

\subsubsection{f-CRU}
Fixed hyperparameters for f-CRU include: 
variance activation for encoder='square', decoder='exp', transition='relu'
encoder variance activation='square', decoder variance activation='exp', number of basis matrices=20, and the same encoder and decoder network architecture as used in \cite{CRUPaper}.
We perform a search on latent state dimension = $\{8,10,16,20\}$ and learning rate = $\{0.1, 0.05, 0.01, 0.001\}$. We report that the latent dimension=$10$ and learning rate=$0.05$ performs the best on the validation set.

\subsubsection{Latent ODE}

We use Latent ODE model with ODE-RNN encoder. We perform a grid search on latent state dimension=$\{8,10,16,20\}$, recognition network dimension=$\{16,32,64\}$, number of GRU units=$\{16,32,64\}$, number of generation layers=$\{2,3\}$, number of recognition layers=$\{2,3\}$ and learning rate$=\{0.1, 0.05, 0.01, 0.001\}$. The configuration that performs the best on validation split with latent state dimension, recognition network dimension, number of GRU units set to $20$; learning rate of $0.01$ and generation and recognition network layers set to $3$. 

\subsubsection{ODE-RNN}
For ODE-RNN, we use GRU as the RNN model and use the adjoint solver method implemented in the library: \url{https://github.com/rtqichen/torchdiffeq} for solving the ODEs in a differentiable manner. For ODE-RNN, we search over latent state dimension=$\{8,10,16,20\}$ and the learning rates=$\{0.1, 0.05, 0.01, 0.001\}$. We find that the configuration of latent state dimension=$20$ and learning rate = $0.01$ works the best on the validation set.

\subsubsection{ContiFormer}
We use the ContiFormer implementation released by the authors \url{https://github.com/microsoft/SeqML/tree/main/ContiFormer} in our implementation. Note that since USHCN has highest average sequence length, and ContiFormer is memory intensive, we can only fit a batch size of 4 samples in our GPU memory. Keeping other parameters fixed, we vary the latent state dimension = $\{8,10,16,20\}$ and learning rate = $\{0.1, 0.05, 0.01, 0.001\}$. Our experiments show that latent state dimension=$16$ and learning rate=$0.001$ achieves the best performance on validation set.

\subsubsection{GRU-$\Delta_t$}
For GRU-$\Delta_t$, we search over latent state dimensions of the GRU =$\{ 8, 10, 16, 20 \}$ and learning rates=$\{0.1, 0.05, 0.01, 0.005, 0.001\}$. We find latent state dimension=$16$ and learning rate=$0.005$ to be the best performing configuration.

\subsubsection{RKN-$\Delta_t$}
We use the RKN-$\Delta_t$ implementation made available by the authors \url{https://github.com/ALRhub/rkn_share.git}. Our RKN-$\Delta_t$ implementation uses the same encoders and decoders architecture as the CRU model. Keeping other parameters fixed, we search over latent state dimensions = $\{ 8, 10, 16, 20 \}$ and learning rates = $\{0.1, 0.05, 0.01, 0.005, 0.001\}$. Latent state dimension=$20$ and learning rate=$0.001$ results in the best performing model.

\subsubsection{T-PatchGNN}
We perform a grid search over learning rates = $\{0.1, 0.05, 0.01, 0.005, 0.001\}$, time and node embedding dimensions = $\{4,8,16\}$, number of patches=$\{2,4\}$ (more number of patches results in GPU OOM issue), and latent state dimension = $\{4,8,10,12,16\}$, while fixing the number of heads in one transformer layer = number of transformer layers = 1. We find that the configuration with learning rate=0.001, time and node embedding dimension=8, number of patches=2, latent state dimension=16 results in the best validation MSE.

\subsubsection{GraFITi}
We perform a grid search over learning rates = $\{0.1, 0.05, 0.01, 0.005, 0.001\}$, latent state dimension = $\{4, 8, 16, 20, 32, 64\}$, number of layers = $\{1,2,4\}$ and number of attention heads = $\{1,2,4\}$. We report that the configuration with lr=0.01, latent state dimension=8, number of layer=2 and number of attention heads=1 results in the best validation MSE.

\subsubsection{GRUwE}
We perform a grid search over latent state dimensions=$\{8, 10, 16, 20\}$ (to be comparable to other baselines considered) with learning rates $\{0.1, 0.05, 0.01, 0.005, 0.001\}$. For USHCN, the best model configuration that maximizes validation prediction MSE uses learning rate=$0.1$ and latent state dimension=$20$.

\subsubsection{HyperIMTS}
We perform a grid search over learning rates = $\{0.1, 0.05, 0.01, 0.005, 0.001\}$, latent state dimension = $\{8, 16, 20, 32, 64, 128\}$, number of layers = $\{1,2,4\}$ and number of attention heads = $\{1,2,4\}$. We report that the configuration with lr=0.001, latent state dimension=32, number of layer=2 and number of attention heads=2 results in the best validation MSE.

\section{Hyperparameters for Next Observation Prediction Task on Physionet}
\label{section:baseline-hopt-physionet}
We keep the following hyperparameters constant across all methods: batch size=$100$, number of training epochs=$100$, learning rate decay=$0.99$ and gradient clipping enabled. Below are the model specific experiments we carried out.
\subsubsection{mTAND}

For mTAND, we perform a grid search over time embedding dimension = $\{32, 64, 128\}$, latent state dimension =$\{8,10,16,20,22,24\}$, number of reference points=$\{32, 64, 128\}$ and learning rate $=\{0.1, 0.05, 0.01, 0.001\}$. Of which, time embedding dim=$32$, latent state dimension=$22$, number of reference points=$64$ and learning rate=$0.01$ performs the best on the validation data.

\subsubsection{GRU-D}
For GRU-D, we perform a search over latent state dimension = $\{8,10,16,20,22,24,32,64\}$ and learning rate $=\{0.1, 0.05, 0.01, 0.001\}$. We find that configuration with latent state dimension=$16$ and learning rate=$0.01$ performs best on the validation set.

\subsubsection{f-CRU}

Fixed hyperparameters for f-CRU include: 
variance activation for encoder='square', decoder='exp', transition='relu'
encoder variance activation='square', decoder variance activation='exp', number of basis matrices=20, and the same encoder and decoder network architecture as used in \cite{CRUPaper}.
We perform a search on latent state dimension = $\{8,10,16,20,32,64\}$ and learning rate = $\{0.1, 0.05, 0.01, 0.001\}$. We report that the latent dimension=$16$ and learning rate=$0.001$ performs the best on the validation set.

\subsubsection{CRU}
Fixed hyperparameters for CRU are: 
variance activation for encoder='square', decoder='exp', transition='relu'
encoder variance activation='square', decoder variance activation='exp', number of basis matrices=20, and the same encoder and decoder network architecture as used in \cite{CRUPaper}.
We perform a search on latent state dimension=$\{8,10,16,20,32,64\}$ and learning rate $=\{0.1, 0.05, 0.01, 0.005, 0.001\}$. We report that the latent dimension=$32$ and learning rate=$0.005$ performs the best on validation set.

\subsubsection{Latent ODE}
We use Latent ODE model with ODE-RNN encoder. We perform a grid search on latent state dimension=$\{8,10,16,20,32,64\}$, recognition network dimension=$\{16,32,64\}$, number of GRU units=$\{16,32,64\}$, number of generation layers=$\{2,3\}$, number of recognition layers=$\{2,3\}$ and learning rate$=\{0.1, 0.05, 0.01, 0.005, 0.001\}$. The configuration that performs the best on validation split with latent state dimension, recognition network dimension, number of GRU units set to $32$; learning rate=$0.005$ and generation and recognition network layers set to $3$.

\subsubsection{ODE-RNN}
For ODE-RNN, we search over latent state dimension=$\{8,10,16,20,32,64\}$ and the learning rates=$\{0.1, 0.05, 0.01, 0.005, 0.001\}$. We find that the configuration of latent state dimension=$16$ and learning rate = $0.001$ works the best on the validation set.

\subsubsection{ContiFormer}
For ContiFormer, we perform a search over the latent state dimension = $\{8, 16,20,32,64\}$ and learning rate = $\{0.1, 0.05, 0.01, 0.001\}$. Our experiments show that latent state dimension=$16$ and learning rate=$0.001$ achieves the best performance on the validation set.

\subsubsection{GRU-$\Delta_t$}
For GRU-$\Delta_t$, we search over latent state dimensions of the GRU =$\{ 8, 10, 16, 20, 24, 32, 64 \}$ and learning rates=$\{0.1, 0.05, 0.01, 0.005, 0.001\}$. We find latent state dimension=$32$ and learning rate=$0.005$ to be the best performing configuration.

\subsubsection{RKN-$\Delta_t$}
Our RKN-$\Delta_t$ implementation uses the same encoders and decoders architecture as the CRU model. Keeping other parameters fixed, we search over latent state dimensions = $\{ 8, 10, 16, 20, 24, 32, 64 \}$ and learning rates = $\{0.1, 0.05, 0.01, 0.005, 0.001\}$. Latent state dimension=$32$ and learning rate=$0.001$ results in the best performing model.

\subsubsection{T-PatchGNN}
We perform a grid search over learning rates = $\{0.1, 0.05, 0.01, 0.005, 0.001\}$, time and node embedding dimensions = $\{4,8,16\}$, number of patches=$\{5,10,20\}$, and latent state dimension = $\{4,8,16,32,64\}$, while fixing the number of heads in one transformer layer = number of transformer layers = 1. We find that the configuration with learning rate=0.001, time and node embedding dimension=8, number of patches=10, latent state dimension=8 results in the best validation MSE.

\subsubsection{GraFITi}
We perform a grid search over learning rates = $\{0.1, 0.05, 0.01, 0.005, 0.001\}$, latent state dimension = $\{4, 8, 10, 16, 20, 32, 64\}$, number of layers = $\{1,2,4\}$ and number of attention heads = $\{1,2,4\}$. We report that the configuration with lr=0.005, latent state dimension=64, number of layer=2 and number of attention heads=1 results in the best validation MSE.

\subsubsection{GRUwE}
We perform a search over learning rate=$\{0.1, 0.025, 0.05, 0.01, 0.005\}$, the hidden state dimensions= $\{10, 16, 20, 24, 32, 64\}$ (to be comparable to other models). For Physionet, we use learning rate=$0.025$ and latent state dimension=$64$.

\subsubsection{HyperIMTS}
We perform a grid search over learning rates = $\{0.1, 0.05, 0.01, 0.005, 0.001\}$, latent state dimension = $\{8, 16, 20, 32, 64, 128\}$, number of layers = $\{1,2,4\}$ and number of attention heads = $\{1,2,4\}$. We report that the configuration with lr=0.005, latent state dimension=16, number of layer=2 and number of attention heads=4 results in the best validation MSE.

\section{Hyperparameters for Next Observation Prediction Task on MIMIC-III-Small}
\label{section:baseline-hopt-mimic-iii-small}
We keep the following hyperparameters constant across all methods: batch size=$100$ (we reduce it for models if we face OOM issue), number of training epochs=$100$, learning rate decay=$0.99$ and gradient clipping enabled. Below are the model specific experiments we carried out.
\subsubsection{mTAND}

For mTAND, we perform a grid search over time embedding dimension = $\{32, 64, 128\}$, latent state dimension =$\{8,16,32,64,128\}$, number of reference points=$\{8, 32, 64, 128\}$ and learning rate $=\{0.1, 0.05, 0.01, 0.001, 0.0001\}$. Of which, time embedding dim=$32$, latent state dimension=$128$, number of reference points=$8$ and learning rate=$0.0001$ performs the best on the validation data.

\subsubsection{GRU-D}
For GRU-D, we perform a search over latent state dimension = $\{8,10,16,32,64,128\}$ and learning rate $=\{0.1, 0.05, 0.01, 0.001\}$. We find that configuration with latent state dimension=$64$ and learning rate=$0.001$ performs best on the validation set.

\subsubsection{f-CRU}

Fixed hyperparameters for f-CRU include: 
variance activation for encoder='square', decoder='exp', transition='relu'
encoder variance activation='square', decoder variance activation='exp', number of basis matrices=20, and the same encoder and decoder network architecture as used in \cite{CRUPaper}.
We perform a search on latent state dimension = $\{8,10,16,20,32,64\}$ and learning rate = $\{0.1, 0.05, 0.01, 0.001, 0.0001\}$. We report that the latent dimension=$64$ and learning rate=$0.0001$ performs the best on the validation set.

\subsubsection{CRU}
Fixed hyperparameters for CRU are: 
variance activation for encoder='square', decoder='exp', transition='relu'
encoder variance activation='square', decoder variance activation='exp', number of basis matrices=20, and the same encoder and decoder network architecture as used in \cite{CRUPaper}.
We perform a search on latent state dimension=$\{8,10,16,20,32,64\}$ and learning rate $=\{0.1, 0.05, 0.01, 0.005, 0.001\}$. We report that the latent dimension=$32$ and learning rate=$0.005$ performs the best on validation set.

\subsubsection{Latent ODE}
We use Latent ODE model with ODE-RNN encoder. We perform a grid search on latent state dimension=$\{8,10,16,20,32,64\}$, recognition network dimension=$\{16,32,64\}$, number of GRU units=$\{16,32,64\}$, number of generation layers=$\{2,3\}$, number of recognition layers=$\{2,3\}$ and learning rate$=\{0.1, 0.05, 0.01, 0.005, 0.001\}$. The configuration that performs the best on validation split with latent state dimension, recognition network dimension, number of GRU units set to $32$; learning rate=$0.001$ and generation and recognition network layers set to $3$.

\subsubsection{ODE-RNN}
For ODE-RNN, we search over latent state dimension=$\{8,10,16,20,32,64\}$ and the learning rates=$\{0.1, 0.05, 0.01, 0.005, 0.001, 0.0001\}$. We find that the configuration of latent state dimension=$64$ and learning rate = $0.0001$ works the best on the validation set.

\subsubsection{ContiFormer}
For ContiFormer, we perform a search over the latent state dimension = $\{8,16,20,32,64\}$ and learning rate = $\{0.1, 0.05, 0.01, 0.001\}$. Our experiments show that latent state dimension=$16$ and learning rate=$0.001$ achieves the best performance on the validation set.

\subsubsection{GRU-$\Delta_t$}
For GRU-$\Delta_t$, we search over latent state dimensions of the GRU =$\{ 8, 10, 16, 20, 24, 32, 64 \}$ and learning rates=$\{0.1, 0.05, 0.01, 0.005, 0.001\}$. We find latent state dimension=$32$ and learning rate=$0.005$ to be the best performing configuration.

\subsubsection{RKN-$\Delta_t$}
Our RKN-$\Delta_t$ implementation uses the same encoders and decoders architecture as the CRU model. Keeping other parameters fixed, we search over latent state dimensions = $\{ 8, 10, 16, 20, 24, 32, 64 \}$ and learning rates = $\{0.1, 0.05, 0.01, 0.005, 0.001,0.0001\}$. Latent state dimension=$64$ and learning rate=$0.0001$ results in the best performing model.

\subsubsection{T-PatchGNN}
We perform a grid search over learning rates = $\{0.1, 0.05, 0.01, 0.005, 0.001, 0.0001\}$, time and node embedding dimensions = $\{4,8,16\}$, number of patches=$\{5,10,20\}$, and latent state dimension = $\{4,8,16,32,64\}$, while fixing the number of heads in one transformer layer = number of transformer layers = 1. We find that the configuration with learning rate=0.0001, time and node embedding dimension=$16$, number of patches=10, latent state dimension=$16$ results in the best validation MSE.

\subsubsection{GraFITi}
We perform a grid search over learning rates = $\{0.1, 0.05, 0.01, 0.005, 0.001\}$, latent state dimension = $\{4, 8, 10, 16, 20, 32, 64\}$, number of layers = $\{1,2,4\}$ and number of attention heads = $\{1,2,4\}$. We report that the configuration with lr=0.005, latent state dimension=64, number of layer=2 and number of attention heads=1 results in the best validation MSE.


\subsubsection{GRUwE}
We perform a search over learning rate=$\{0.1, 0.025, 0.05, 0.01, 0.005\}$, the hidden state dimensions= $\{10, 16, 20, 24, 32, 64\}$ (to be comparable to other models). For MIMIC-III-Small, we use the configuration that results in the best validation MSE: learning rate=$0.001$ and latent state dimension=$32$.

\subsubsection{HyperIMTS}
We perform a grid search over learning rates = $\{0.1, 0.05, 0.01, 0.005, 0.001\}$, latent state dimension = $\{8, 16, 20, 32, 64, 128\}$, number of layers = $\{1,2,4\}$ and number of attention heads = $\{1,2,4\}$. We report that the configuration with lr=0.005, latent state dimension=32, number of layer=2 and number of attention heads=4 results in the best validation MSE.

\section{Hyperparameters for Next Observation Prediction Task on MIMIC-III-Large}
\label{section:baseline-hopt-mimic}
We keep the following hyperparameters constant across all methods: batch size=$1$ (to be able to handle the longer sequences in the MIMIC-III dataset within GPU memory constraints), number of training epochs=$20$, learning rate decay=$0.99$ and gradient clipping enabled. Below are the model specific experiments we carried out.
\subsubsection{mTAND}
Based on MIMIC-III experiments in \cite{mTANDPaper}, we keep the following hyperparameters fixed: time embedding dimension=$128$.
We perform grid search on the hidden state dimension = $\{16, 32, 64\}$, encoder hidden dimension = $\{16, 32, 64\}$, the number of reference points = $\{64, 95\}$ and the learning rates=$\{0.01, 0.005, 0.001\}$. Note that for both hidden state dimensions as 64 and number of reference points as 95, we hit the memory limit on our GPU for batch size=1. Nonetheless, the resulting number of parameters (=174K) for this configuration is higher than that of our proposed model. Our validation results show that latent state dimension=$64$, number of reference points=$95$ and learning rate=$0.001$ performs the best across all combinations.

\subsubsection{GRU-D}
We perform grid search over hidden state dimension=$\{16, 32, 64\}$ and learning rates=$\{0.01, 0.005, 0.001\}$. We report that hidden state=$32$ and learning rate=$0.001$ performs the best on validation set and use it to report the final results.

\subsubsection{Latent ODE}
We use Latent ODE model with ODE-RNN encoder. We perform a grid search on latent state dimension=$\{16,32,64\}$, recognition network dimension=$\{16,32,64\}$, number of GRU units=$\{16,32,64\}$, number of generation layers=$\{2,3\}$, number of recognition layers=$\{2,3\}$ and learning rate$=\{0.01,0.005, 0.001\}$. The configuration that performs the best on validation split with latent state dimension, recognition network dimension, number of GRU units set to $32$; learning rate of 0.001 and generation and recognition network layers set to $3$. Other hyperparameters that were kept fixed are: batch size=$1$, learning rate decay=$0.99$ with gradient clipping. 

\subsubsection{f-CRU}
We perform a grid search over the latent state dimensions= $\{16, 32, 64\}$ and learning rates=$\{0.01, 0.005, 0.001\}$. We set latent observation dimension as half the size of latent state dimension. Number of basis matrices = 20, and Gradient clipping enabled. Encoder consists of 3 $\times$( FullyConnected(50) + ReLU + Layer normalization) followed by linear output for latent observation and output; square activation for latent observation variance. Decoder consists of 3 $\times$ (FullyConnected(50) + ReLU + Layer normalization) followed by a linear output. Decoder output variance consists of (FullyConnected(50) + ReLU + Layer normalization) followed by linear output and square activation. Activation function for transition function is ReLU. After performing the grid search, the best configuration of hyperparameters are: latent state dimension=$64$, and learning rate=$0.001$.

\subsubsection{CRU}

Fixed hyperparameters for CRU are: 
variance activation for encoder='square', decoder='square', transition='relu', number of basis matrices=20, and the same encoder and decoder network architecture used in f-CRU (above).
We perform a search on latent state dimension=$\{16,32,64\}$ and learning rate $=\{0.1, 0.01, 0.001, 0.0001\}$. We report that the latent dimension=$32$ and learning rate=$0.0001$ performs the best on validation set.

\subsubsection{ODE-RNN}
For ODE-RNN, we search over latent state dimension=$\{16,32,64\}$ and the learning rates=$\{0.1, 0.05, 0.01, 0.005, 0.001\}$. We find that the configuration of latent state dimension=$32$ and learning rate = $0.005$ works the best on the validation set.

\subsubsection{ContiFormer}
For ContiFormer, we perform a search over the latent state dimension = $\{16,32,64\}$ and learning rate = $\{0.1, 0.05, 0.01, 0.001\}$. Our experiments show that latent state dimension=$64$ and learning rate=$0.001$ achieves the best performance on the validation set.

\subsubsection{GRU-$\Delta_t$}
For GRU-$\Delta_t$, we search over latent state dimensions of the GRU =$\{ 16, 32, 64 \}$ and learning rates=$\{0.1, 0.05, 0.01, 0.005, 0.001\}$. We find latent state dimension=$16$ and learning rate=$0.001$ to be the best performing configuration.

\subsubsection{RKN-$\Delta_t$}
Our RKN-$\Delta_t$ implementation uses the same encoders and decoders architecture as the CRU model. Keeping other parameters fixed, we search over latent state dimensions = $\{ 16, 32, 64 \}$ and learning rates = $\{0.001, 0.0005, 0.0001, 0.00005\}$. Note that we search over smaller values of the learning rate because, the model would not converge for higher ones (we get "NaN" during optimization for higher rate). Latent state dimension=$32$ and learning rate=$0.00005$ results in the best performing model.

\subsubsection{T-PatchGNN}
We perform a grid search over learning rates = $\{0.1, 0.05, 0.01, 0.005, 0.001\}$, time and node embedding dimensions = $\{4,8,16\}$, number of patches=$\{2,4\}$ (more number of patches results in GPU OOM issue), and latent state dimension = $\{4,8,10,12\}$, while fixing the number of heads in one transformer layer = number of transformer layers = 1. We find that the configuration with learning rate=0.001, time and node embedding dimension=20, number of patches=2, latent state dimension=12 results in the best validation MSE.

\subsubsection{GraFITi}
We perform a grid search over learning rates = $\{0.1, 0.05, 0.01, 0.005, 0.001\}$, latent state dimension = $\{16, 32, 64\}$, number of layers = $\{1\}$ (more number of layers on MIMIC-III causes OOM) and number of attention heads = $\{1\}$. We report that the configuration with lr=0.005, latent state dimension=64, number of layer=1 and number of attention heads=1 results in the best validation MSE.


\subsubsection{GRUwE}
After searching over hidden state=$\{10, 16, 20\}$ and learning rates=$\{0.1, 0.05, 0.01, 0.005, 0.001\}$. For the prediction tasks, we find the combination of $16$ and $0.001$ perform the best.

\subsubsection{HyperIMTS}
For HyperIMTS, we were unable to run experiments on the MIMIC-III Large dataset because even the smallest tested configuration (batch size = 1, latent dimension = 8, number of layers = 1, number of attention heads = 2) consistently triggered GPU memory errors during training. It is very likely due to long sequences in the MIMIC-III Large dataset as reported in \cref{tab:stats_dataset}.

\section{Hyperparameters for Event Prediction task}
\label{section:baseline-hopt-event-prediction}
We perform a grid search to tune the hyperparameters for the event prediction models: GRUwE, NHP, ATTNHP, THP and RMTPP. For RNN-based models (i.e., GRUwE, NHP, RMTPP), hidden size, learning rate and batch size are the relevant hyperparameters. For Attention-based models (i.e., ATTNHP, THP, SAHP), we optimize hidden size, learning rate, batch size, number of attention layers and embedding size. For all our experiments, we use Adam optimizer during training. 
\begin{table}[H]
    \centering
    \caption{Grid search configurations for hyperparameter tuning}
    \begin{tabular}{llccc}
        \toprule
        \textbf{Model Type} & \textbf{Hyperparameter} & \textbf{Values} \\
        \midrule
        \multirow{3}{*}{\parbox{4cm}{RNN-based\\ (NHP, RMTPP, GRUwE)}} & hidden size & \{16, 32, 64\} \\
        & learning tate & \{1e-1, 5e-2, 1e-2, 5e-3, 1e-3\} \\
        & batch size & \{32, 64, 128, 256\} \\
        \midrule
        \multirow{5}{*}{\parbox{4cm}{Attention-based\\ (ATTNHP, THP)}} & hidden Size & \{16, 32, 64\} \\
        & embedding size & \{4, 8, 16\} \\
        & num. layers & \{1, 2, 3\} \\
        & learning rate & \{1e-1, 5e-2, 1e-2, 5e-3, 1e-3\} \\
        & batch size & \{32, 64, 128, 256\} \\
        \bottomrule
    \end{tabular}
    
    \label{tab:grid_search}
\end{table}

\noindent The grid search values for each parameter are reported in \Cref{tab:grid_search}. For each dataset, we perform an independent hyperparameter search and train the model for up to 30 epochs with early stopping based on validation log-likelihood. We select the model configuration that achieves the highest validation log-likelihood.

\section{Rank-based analysis for next event prediction.}
\label{appendix:next_event_prediction}
We use the rank metrics as described in \cref{table:intensity-based-ranks} to report the average rank statistic in the main results of our paper.
\begin{table}[H]
\centering
\renewcommand{\arraystretch}{1.25} 
\scriptsize
\caption{Model performance comparison for the next event prediction on the Taxi, Retweet, StackOverflow, and Amazon datasets. Lower rank means better model for RMSE and ER metrics.}
\begin{tabular}{c | c c | c c | c c | c c}
\toprule
& \multicolumn{2}{c|}{\textbf{Taxi}} 
& \multicolumn{2}{c|}{\textbf{Retweet}} 
& \multicolumn{2}{c|}{\textbf{StackOverflow}} 
& \multicolumn{2}{c}{\textbf{Amazon}} \\
\textbf{Model} & RMSE (Rank) & ER (Rank) 
               & RMSE (Rank) & ER (Rank) 
               & RMSE (Rank) & ER (Rank) 
               & RMSE (Rank) & ER (Rank) \\
\midrule
FullyNN & 0.373 (7) & N.A. 
        & \textbf{21.92 (1)} & N.A. 
        & 1.375 (6) & N.A. 
        & 0.615 (5) & N.A. \\
NHP    & \textbf{0.369 (1)} & 9.22 (3) 
        & 22.32 (5) & \textbf{40.25 (1)} 
        & 1.369 (2) & 55.78 (4) 
        & 0.612 (2) & 68.30 (5) \\
A-NHP & 0.370 (4) & 11.42 (6) 
        & 22.28 (3) & 41.05 (4) 
        & 1.370 (4) & 55.51 (2) 
        & 0.612 (2) & \textbf{65.65 (1)} \\
THP    & \textbf{0.369 (1)} & 8.85 (2) 
        & 22.32 (5) & \textbf{40.25 (1)} 
        & \textbf{1.368 (1)} & 55.60 (3) 
        & 0.612 (2) & 66.72 (2) \\
SAHP   & 0.372 (6) & 9.75 (4) 
        & 22.40 (7) & 41.60 (5) 
        & 1.375 (6) & 56.10 (5) 
        & 0.619 (6) & 67.70 (4) \\
RMTPP  & 0.370 (4) & 9.86 (5) 
        & 22.31 (4) & 44.10 (6) 
        & 1.370 (4) & 57.50 (6) 
        & 0.634 (7) & 73.66 (6) \\
\midrule
GRUwE & \textbf{0.369 (1)} & \textbf{8.58 (1)} 
        & 22.21 (2) & 40.81 (3) 
        & 1.369 (2) & \textbf{55.34 (1)} 
        & \textbf{0.611 (1)} & 67.24 (3) \\
\bottomrule
\end{tabular}

\label{table:intensity-based-ranks}
\end{table}

\section{Computational Cost Analysis}
\label{section:computational_cost}

\begin{figure}[H]
    \centering
    \begin{subfigure}[b]{0.49\textwidth}
        \centering
        \includegraphics[width=\textwidth]{images/avg_train_time.png}
        \caption{}
        \label{fig:computation-avg-train-time}
    \end{subfigure}
    \hfill
    \begin{subfigure}[b]{0.49\textwidth}
        \centering
        \includegraphics[width=\textwidth]{images/GPU_memory.png}
        \caption{}
        \label{fig:memory-consumption}
    \end{subfigure}
        \begin{subfigure}[b]{0.49\textwidth}
        \centering
        \includegraphics[width=\textwidth]{images/avg_inference_time.png}
        \caption{}
        \label{fig:computation-avg-infer-time}
    \end{subfigure}
    \begin{subfigure}[b]{0.49\textwidth}
        \centering
        \includegraphics[width=\textwidth]{images/online_inference_and_memory_vertical.png}
        \caption{}
        \label{fig:benchmark_online}
    \end{subfigure}
    
    \caption{\small Computational cost analysis on the Physionet dataset. (\textbf{a}) Compares the average train time per epoch in seconds. (\textbf{b}) Peak GPU memory usage in GBs during training. (\textbf{c}) Average inference time in milliseconds per instance. (\textbf{d}) Comparison of the inference time (\textbf{left}) and memory consumption (\textbf{right}) in an online deployment.} 
    \label{fig:computational_complexity}
    \vspace{-15pt}
\end{figure}
We provide a comprehensive comparison of the computational complexity across all models investigated in our study in the \cref{fig:computational_complexity}. We evaluate and contrast the training times, inference times, and peak GPU memory usage during training for each model when performing next observation prediction task on the Physionet dataset. The average train and inference times are computed by taking the average across multiple epochs and samples. Our analysis reveals a clear trend in terms of the train time. We note that methods (ContiFormer, ODE-RNN, Latent ODE) that rely on numerical solvers exhibit the longest training times, followed by models (CRU, f-CRU, RKN-$\Delta_t$) that assume linear dynamics. Recurrent models (GRUwE, GRU-D, GRU-$\Delta_t$) demonstrate moderate training times. Lastly, models that can be parallelized in the time dimension (mTAND, T-PatchGNN, GraFITi, HyperIMTS) are the fastest to train. However, it is crucial to note that the models with the fastest training times (mTAND, T-PatchGNN, GraFITi, HyperIMTS) come with a significantly higher memory requirement as illustrated in \cref{fig:memory-consumption}.

Having trained the time series model on retrospective EHR data, our primary goal is to deploy it as an early warning system to prevent adverse patient conditions in the ICU. This model operates in an inherently online context, characterized by a continuous stream of (near) real-time data. This data stream includes critical patient information such as vital signs and medication administration records, which the deployed model processes continuously. To simulate this online environment, we use the Physionet dataset, streaming observations from 100 randomly selected patients in an online manner. The models are evaluated based on total inference time and peak GPU memory usage during inference. In our analysis, we compare GRUwE with models that offer the best train times and are among the top performing models in next observation prediction task: mTAND, GraFITi and T-PatchGNN. We observe that since mTAND, GraFITi, T-PatchGNN and HyperIMTS do not have a Markovian state representation, it needs to buffer the past observations to make the inference. This results in higher inference cost both in terms of time and memory. Moreover, these costs grows over time. In contrast, models that consists of Markov state representation such as GRUwE, are independent of the past observations thus, maintaining a constant, low time and memory requirement.

\section{Computing Infrastructure}
\label{section:compute}

We used one server machine to deploy the experiments reported in the paper. This machine is equipped with 100GB memory, one NVIDIA L40S GPU, with Intel Xeon Platinum 8462Y+ @ 2.80 GHz processor and 16 CPU cores.

\end{document}